\def\eqref#1{equation~\ref{#1}}
\def\1{\bm{1}}
\DeclareMathAlphabet{\mathsfit}{\encodingdefault}{\sfdefault}{m}{sl}
\SetMathAlphabet{\mathsfit}{bold}{\encodingdefault}{\sfdefault}{bx}{n}
\newcommand{\E}{\mathbb{E}}
\newcommand{\R}{\mathbb{R}}
\newcommand{\Var}{\mathrm{Var}}
\newtheorem{definition}{Definition}
\newtheorem{lemma}{Lemma}
\newtheorem{remark}{Remark}
\newcommand{\Normal}{\mathcal{N}}
\newcommand{\Laplace}{\mathrm{Laplace}}
\newcommand{\MSE}{\mathrm{MSE}}
\DeclareMathOperator\supp{supp}
\newcommand{\methodname}{MR-GPTQ}
\DeclareTextFontCommand{\emph}{\em}
\setlist[itemize,enumerate]{
  itemsep=0.5ex,
  leftmargin=0.5\leftmargini
}
\renewcommand{\paragraph}[1]{\vspace{0.1em}\noindent\textbf{#1}}
\title{Bridging the Gap Between \\ Promise and Performance for \\ Microscaling FP4 Quantization }
\author{Vage Egiazarian\footnotemark[1]\:\:\& Andrei Panferov\footnotemark[1] \\
Institute of Science and Technology Austria\\
\AND
Denis Kuznedelev\footnotemark[1] \\
Yandex Research \\
\AND
Shubhra Pandit, Alexandre Marques \& Mark Kurtz \\
Red Hat AI \\
\AND
Saleh Ashkboos \& Torsten Hoefler \\
ETH Z\"{u}rich \\
\AND
Roberto L. Castro\thanks{Equal contributions.}\footnotemark[1]\:\:, Eldar Kurti\'{c} \& Dan Alistarh \thanks{ Correspondence to: \texttt{dan.alistarh@ist.ac.at}.} \\
Institute of Science and Technology Austria \& Red Hat AI\\
}
\begin{document}

\maketitle

\vspace{-1.5em}
\begin{abstract}

The recent hardware-accelerated microscaling 4-bit floating-point formats such as MXFP4 and NVFP4, supported on NVIDIA and AMD GPUs, promise to revolutionize large language model (LLM) inference. Yet, their practical benefits remain unproven. We present the first comprehensive study of  MXFP4 and NVFP4 for post-training quantization, revealing  gaps between their promise and real-world performance. Our analysis shows that state-of-the-art methods struggle with FP4, due to two key issues: 
(1) NVFP4's small group size \emph{provably} neutralizes traditional outlier mitigation techniques; 
(2) MXFP4's power-of-two scale quantization severely degrades accuracy due to high induced error. 
To bridge this gap, we introduce Micro-Rotated-GPTQ (MR-GPTQ), a variant of the classic GPTQ quantization algorithm that tailors the quantization process to FP4's unique properties, by using block-wise Hadamard transforms and format-specific optimizations. We support our proposal with a set of high-performance GPU kernels that enable the MR-GPTQ format with negligible overhead, by rotation fusion into the weights, and fast online computation of the activations. This leads to speedups vs. FP16 of up to 3.6x layer-wise, and 2.2x end-to-end on NVIDIA B200, and of 6x layer-wise and 4x end-to-end on RTX5090. Our extensive empirical evaluation demonstrates that MR-GPTQ matches or outperforms state-of-the-art accuracy, significantly boosting MXFP4, to the point where it nears that of NVFP4. We conclude that, while FP4 is not an automatic upgrade over INT4, format-specialized methods like MR-GPTQ can unlock a new frontier of accuracy-performance trade-offs.

\end{abstract}

\vspace{-1.1em}
\section{Introduction}

Post-training quantization (PTQ)~\citep{nagel2020up, gptq, lin2024awq} is one of the most well-researched areas in model compression, in which the objective is to take an existing pre-trained model and reduce its size or computation while preserving most of its accuracy. 
With the advent of large language models (LLMs), PTQ has become a highly-active research area, e.g.,~\citep{gptq, atom, quik, spqr, tseng2024quipbetterllmquantization, AQLM, Tseng2024_QTIP} with significant industry adoption and practical impact~\citep{kurtic2025giveme}.  

In this paper, we focus on quantization using the recently-introduced microscaling floating-point precision formats, specifically MXFP4~\citep{mxfp} and NVFP4~\citep{nvidiaGB100}. 
In a nutshell, these formats work by grouping elements into blocks of 32 or 16 elements, respectively, quantized together with a shared scale; to reduce the storage overhead, the scales themselves are also compressed, to distinct 8-bit format: a standard sharing between Exponent and Mantissa bits (E4M3) for NVFP4, and E8M0---essentially, rounding scales to powers-of-two---for MXFP4. 
As such, the NVFP4 format trades off additional space (4.5 bits per element on average, relative to 4.25 bits for MXFP4), in favor of additional precision. 
The promise of these formats is two-fold: first, they are \emph{claimed to be more accurate} than the prior-generation integer precision formats such as INT4~\citep{TRT-FP4}. 
Second, they are \emph{supported in hardware}: NVIDIA Blackwell GPUs support matrix multiplications across both NVFP and MXFP formats, whereas AMD GPUs will support MXFP4~\citep{amd2025cdna4}. Despite these developments, little is known about the accuracy of these formats on real models or their practical performance.

\paragraph{Contributions.} 
In this paper, we provide a first thorough study of the accuracy and performance limitations of the NVFP4 and MXFP4 formats through the prism of current state-of-the-art quantization methods, coupled with computational support. 
We focus primarily on weight-and-activation quantization to 4-bits per parameter, and investigate the interaction between these new formats, real parameter distributions, and state-of-the-art quantization algorithms. 
Our main findings are:

\begin{itemize}
    \item We begin with an analysis of quantization error induced by the NVFP4 and MXFP4 formats over both Laplace-like heavy-tailed distributions, which arise in real-world weights and activations~\citep{akhondzadeh2025kurtailkurtosisbasedllm, dotzel2024learning}, and over Normal parameter distributions, arising when processing weights and activations via rotations in popular methods such as QuIP/QuIP\#~\citep{quip, tseng2024quipbetterllmquantization} or QuaRot~\cite{quarot}. Interestingly, we can \textit{prove analytically} and \textit{show empirically} that rotations \emph{improve} MXFP4 accuracy, but \emph{hurt} NVFP4 accuracy when coupled with standard Round-to-Nearest (RTN) quantization. 
    \item Based on this analysis, we propose a new variant of the GPTQ weight quantization algorithm~\cite{gptq}, called Micro-Rotated-GPTQ (\methodname{}),  explicitly designed to maximize accuracy across both MXFP4 and NVFP4. The algorithm employs Hadamard rotations at the group level to ``normalize'' weights and activations, but in a novel \emph{block-wise fused} form, which, as we show, can be supported without any runtime overheads on Blackwell GPUs. In addition, \methodname{} introduces a new efficient variant of the activation re-ordering heuristic for GPTQ, along with format-specific scale search optimizations. 
    \item We perform the first extensive study of NVFP4 and MXFP4 \emph{practical accuracy}, across standard Llama-3~\citep{dubey2024llama} and Qwen-3~\citep{yang2024qwen25} models of different sizes, evaluated on standard zero-shot tasks~\citep{eval-harness}. 
    We investigate a broad set of compression methods, including RTN, GPTQ~\citep{gptq}, SmoothQuant~\citep{smoothquant}, QuaRot~\citep{quarot}, and SpinQuant~\citep{spinquant}, as well as our new \methodname{} approach. 
    Results show that:
     (1) both NVFP4 and MXFP4 are lossy, with MXFP4 inducing major accuracy drops ($\sim$ 10\% relative), and (2) that existing techniques are not well-suited for these new formats, as they do not always outperform RTN. 
     On the positive side, we show that GPTQ and the \methodname{} variant yield consistently good recovery for NVFP4. Moreover, \methodname{} works particularly well in conjunction with MXFP4, recovering accuracy within 1-2\% of NVFP4. For large models, we show that both formats can recover up to 98-99\% of the baseline FP16 accuracy. 
     \item Our main technical contribution is a new set of GPU kernels specific to the Blackwell architecture called QuTLASS, showing that the ``micro-rotation'' component of \methodname{} can be supported without loss of performance relative to standard multiplications. Specifically, this comes in the form of a lightweight fused kernel for online rotation of the activations. Remarkably, our kernel for MXFP4 \emph{can obtain higher throughput} than an \emph{ideal} NVFP4 matrix multiplication. Our kernels obtain near-ideal layer-wise speedups for both B200 and RTX5090 GPUs, of 3.6x and 6x, respectively, leading to end-to-end inference speedups of 2x and 4x, respectively. 
    
\end{itemize}

\vspace{-0.5em}
\section{Background on Microscaling Floating-Point Formats}

\paragraph{General Definition.}
The microscaling MXFP4 and NVFP4 formats employ hierarchical quantization, where elements within a block share a common scale factor, enabling efficient hardware implementation.
Given a tensor divided into one-dimensional groups, we define a \textbf{Microscaling Block Floating-Point (MFP)} representation as follows. 
The \textbf{Group Size (G)} is the number of elements in each group before quantization. The \textbf{Element Representation (E)} is the format used to represent the individual elements within each block. The \textbf{Scale Representation (S):} The format used to represent the scale values for each group.

For floating-point (FP) formats, we use the notation \textbf{ExMy} to say that x bits are allocated to the exponent, and y bits are allocated to the mantissa. 
For instance, in the standard FP4 E2M1 representation, each FP4 element consists of 1 sign bit, 2 exponent bits, and 1 mantissa bit, providing 7 distinct positive values $\{0.5, 1.0, 1.5, 2.0, 3.0, 4.0, 6.0\}$ plus zero and the negatives. 

\paragraph{The MXFP4 (Microscaling FP4) Format.} This format~\cite{mxfp} follows the specification $(G = 32, E = \text{FP4}, S = \text{E8M0})$. Its distinguishing features are the group size of 32 and its quantization of group scales to powers-of-two, given the use of E8M0, which dedicates all bits to the exponent and none to the mantissa. This design choice simplifies hardware multiplication; yet, as our experiments reveal, it often introduces quantization artifacts that can significantly impact model accuracy. 

\paragraph{The NVFP4 (NVIDIA FP4) Format} was introduced by NVIDIA for the Blackwell architecture~\citep{nvidiaGB100}, and employs a more flexible approach with $(G = 16, E = \text{FP4}, S = \text{E4M3})$. While sharing the  FP4 element format with MXFP4, NVFP4 differs in two key aspects. First, it uses a 16-element group size, and, second, it uses a full FP8 representation for scales in E4M3, preserving more precise scaling information relative to E8M0. 
NVFP4 trades off a more accurate representation for weight and activation distributions, at the cost of increased bits-per-element (4.5 NVFP4 vs 4.25 for MXFP4). 

\begin{figure}[t]
    \centering
    \vspace{-1em}
    \includegraphics[width=\linewidth]{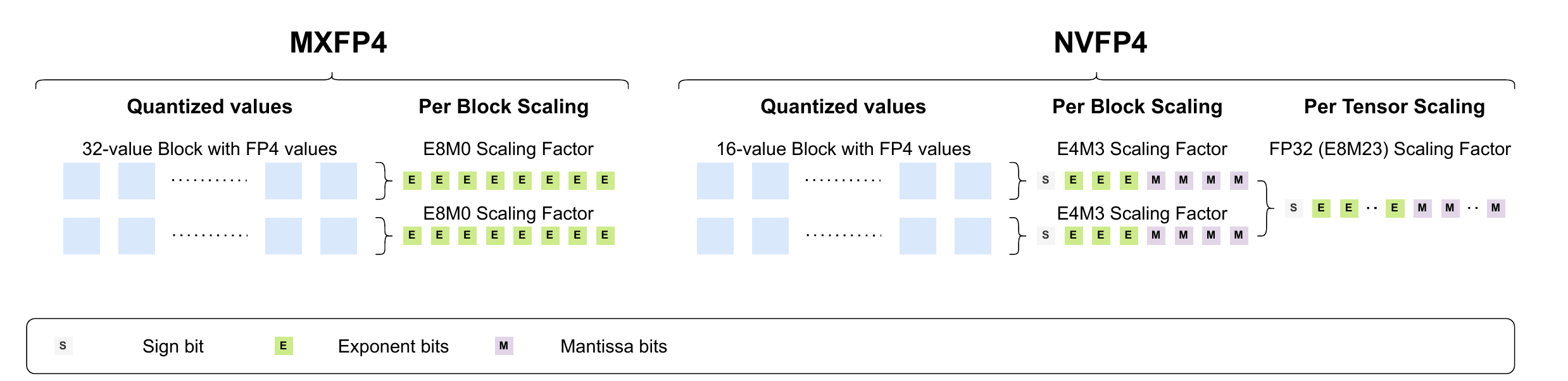}
    \caption{Schematic illustration of the MXFP4 (\textbf{left}) and NVFP4 (\textbf{right}) microscaling formats.}
    \label{fig:mxfp_and_nvfp_illustration}
\end{figure}


\paragraph{Related Work.}
Early work on LLM quantization focused primarily on integer formats, with INT8 being the first to be investigated~\citep{llm_int8, Yao2022_ZeroQuant}, in conjunction with round-to-nearest (RTN) assignment over groups of consecutive weights and activations. 
FP formats introduce new possibilities but also challenges: while FP8 quantization is known to be near-lossless~\citep{kurtic2025giveme}, 
the distribution of representable values in NVFP4/MXFP4 changes quantization dynamics. 
The GPTQ method~\citep{gptq} reached near-lossless INT4 compression via second-order weight adjustments. Its effectiveness for FP4 formats remains unexplored. 
Methods like AWQ~\citep{Lin2024_AWQ}, SqueezeLLM~\citep{kim2023squeezellm}, and SpQR~\citep{spqr} relied on outlier-aware quantization strategies that assume uniform grids and large group sizes. The FP4 formats' small group sizes (16 or 32) and non-uniform grid inherently perform outlier mitigation, as we discuss in our analysis. 
Recent extreme compression techniques like QuIP~\citep{quip}, QuIP\#~\cite{tseng2024quipbetterllmquantization} and QTIP~\citep{Tseng2024_QTIP} use rotation matrices to normalize the weight distributions. As we will see, this is not necessarily helpful for FP4 microscaling formats. 

LLM activations are known to be extremely challenging to quantize, due to outlier features, defined roughly as elements up to 100× larger than average~\citep{llm_int8}. SmoothQuant~\citep{smoothquant} addresses this for INT8 by rescaling to redistribute outliers between weights and activations. 
Recent rotation-based methods like QuaRot~\citep{quarot} and SpinQuant~\citep{spinquant} mitigate outliers through Hadamard transforms. In this paper, we discover novel trade-offs for these approaches. 

Prior work investigating accuracy trade-offs under quantization, e.g.,~\citet{Yao2022_ZeroQuant, liu2023emergent, huang2024good, gong2024makes, li2024evaluating, gong2024llmcbenchmarkinglargelanguage, lee2024comprehensive,kurtic2025giveme} 
focuses almost exclusively on INT quantization. 
Despite industry claims about FP4's accuracy superiority~\citep{TRT-FP4, nvidiaGB100}, rigorous evaluation remains absent so far, likely due to the recent introduction of this format. Our work addresses this gap. 






\section{A Quantization Error Analysis of NVFP4 and MXFP4}
\label{sec:error_analysis}


Prior work on quantization~\cite{nagel2020up, llm_int8, qlora} identified the average and top-element (outlier) mean-square error (MSE) as key quantities that can predict quantized model accuracy. 
In this section, we perform a model-based analysis of the NVFP4 and MXFP4 formats from the prism of these metrics. 


\begin{figure}[tb]
  \centering
  \includegraphics[width=1.0\textwidth]{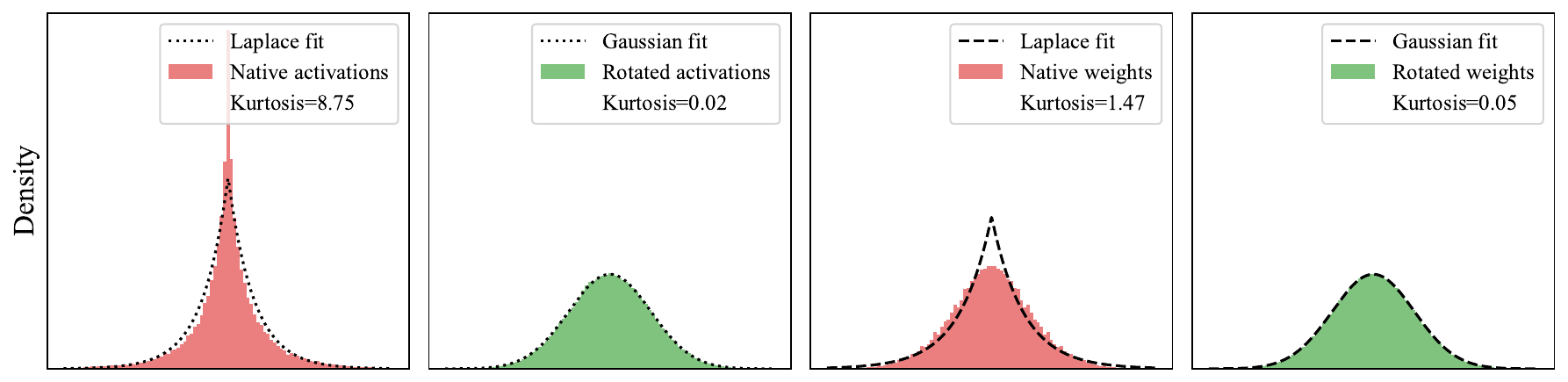}

  \caption{Distribution fits for aggregate weights and activations of Llama-3.1-8B-Instruct, with and without rotations. The Normal distribution is clearly a good fit for rotated weights and activations, while the Laplace distribution provides a good fit for the native distributions. Although native weights appear Normal, they have much heavier tails, as evidenced by the Kurtosis value.}
  \label{fig:fits-side-by-side}
\end{figure}

\paragraph{Modeling Distributions.} 
Early work on modeling LLM parameters assumed a Normal (Gaussian) distribution~\cite{dettmers2023qlora}, consistent with common initialization schemes. 
Yet, more recent studies have identified that distributions with high kurtosis, such as the Laplace or Student-t distributions, better model the sharp peaks and outlier-prone tails of weights and activations~\cite{akhondzadeh2025kurtailkurtosisbasedllm, dotzel2024learning}. 

Here, we follow the latter line of work and model weights and activations as following a Laplace distribution. 
At the same time, interestingly, it can be proven that, \emph{after the Hadamard rotation}, these tensors tend to follow a \emph{normal} distribution~\citep{quip,tseng2024quipbetterllmquantization}. 
We empirically validate these findings via fits over common models, illustrated in Figure~\ref{fig:fits-side-by-side}. 
Formally, our modeling is as follows: 

\begin{definition}[Modeling]
We assume that the ``native'' weights and activations follow the Laplace distribution $W\sim\Laplace(0,b)$ with density
$f_W(w)=\tfrac{1}{2b}e^{-|w|/b}$, and variance $\Var(W)=2b^2$. We fix unit variance throughout, so $b=1/\sqrt{2}$. The magnitude $Z=|W|$ is $\mathrm{Exp}(\lambda)$ with rate $\lambda=1/b=\sqrt{2}$, that is
$f_Z(z)=\lambda e^{-\lambda z}$ and $F_Z(z)=1-e^{-\lambda z}$ for $z\ge 0$.

 We assume that weights and activations rotated via the Hadamard transform follow a Normal distribution $V\sim\Normal(0,1)$. The magnitude $Z=|V|$ is half-normal with
$f_Z(z)=\sqrt{\tfrac{2}{\pi}}\,e^{-z^2/2}$ and $F_Z(z)=\operatorname{erf}(z/\sqrt{2})$, $z\ge 0$, where where $\operatorname{erf}(z)$ is the standard Gauss error function \(\frac{2}{\sqrt{\pi}}\int_{0}^{z} e^{-t^{2}}\,dt. \)

\end{definition}

\paragraph{Quantization.} We model Microscaling Block Floating-Point (MFP) quantization as follows. Consider i.i.d.\ blocks containing $G\ge 2$ elements drawn from some distribution: $X=(X_1,\dots,X_G)$ with $\Var(X_i)=1$ and $Z_i=|X_i|$. We assume a grid $\mathcal{Q}\subset[0,1]$ that is finite, symmetric around $0$, and includes both $0$ and $1$; we write $\mathcal{Q}^+=\mathcal{Q}\cap[0,1]$ and $q_{\min}=\min(\mathcal{Q}^+\setminus\{0\})$. We use round-to-nearest (RTN) quantization, assuming probability~0 for rounding ties.
Next, we formally define the scaling process. For simplicity, we will not \emph{not} quantize the scale $s$ itself, and assume that values are normalized to $[-1, 1]$. We remove these assumptions in our numerical validation (Section~\ref{sec:numerical_validation}).

\begin{definition}[Scales]
For a block of elements $X$, we define the unquantized scale
$ s \coloneqq \max_{1\le i\le G}|X_i| ,$
the normalized entries $U_i\coloneqq X_i/s\in[-1,1]$, the quantized normalized entries $\widehat{U}_i\coloneqq \operatorname{RTN}_{\mathcal{Q}}(U_i)$, and the de-normalized quantized values $\widehat{X}_i \coloneqq s\,\widehat{U}_i$.
\end{definition}

\begin{definition}[Quantization Metrics]\label{def:metrics}
For a group size $G$, we define: 
(i) \emph{The per-element MSE}: $\MSE(G) \coloneqq \E[(X_1-\widehat{X}_1)^2]$ (by symmetry,  the choice of index can be arbitrary). 
(ii) \emph{The top-element MSE per block}:
Let $I_\star=\arg\max_{1\le i\le G}|X_i|$, ignoring ties. Define
$\MSE_{\mathrm{top}}(G) \;\coloneqq\; \E\!\left[(X_{I_\star}-\widehat{X}_{I_\star})^2\right].$ 
We always use the same MFP map, i.e. same scale $s$, for both metrics.
\end{definition}

\begin{remark}[Quantization Dead-zone]
The first positive quantization level in the grid $Q$, which we denote by $q_{\min}$, induces the dead-zone half-width $\delta \coloneqq q_{\min}/2$ on $[0,1]$. If $|U_i|<\delta$, then $\widehat{U}_i=0$.
\end{remark}



\subsection{Analytical MSE Bounds}
\label{sec:analytical_bounds}

Next, we derive bounds on quantization error across top and average elements. First, notice that, in a simplified setting, applying the Hadamard rotation spreads the MSE evenly among elements. 

\begin{lemma}[Top-Element MSE]
\label{lem:top-element}
    Assume a vector  $x\in\R^G$ with coordinates i.i.d.\ $\Normal(0,1)$, to which we apply a Hadamard rotation, perform MFP quantization in the $y$-domain to produce $\widehat{y}$, and reconstruct $\widehat{x}=\tfrac{1}{\sqrt{G}}H^\top\widehat{y}$. Define the quantization error vectors $\varepsilon_y=\widehat{y}-y$ and $\varepsilon_x=\widehat{x}-x=\tfrac{1}{\sqrt{G}}H^\top\varepsilon_y$. The expected squared error on the original top coordinate $I_\star=\arg\max_i|x_i|$ is the per-element MSE: 
    \[
    \MSE_{\mathrm{top}}(G)=\E[(\varepsilon_x)_{I_\star}^2]=\frac{1}{G}\,\E\|\varepsilon_y\|_2^2=\MSE(G). 
    \]
\end{lemma}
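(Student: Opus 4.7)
The plan decomposes into three complementary pieces: a pathwise isometry from the orthogonality of $\tfrac{1}{\sqrt G}H$, a distributional symmetry of the rotated vector $y$, and a spreading argument for the top coordinate in the $x$-domain. First I would observe that $\tfrac{1}{\sqrt G}H$ is orthogonal, so $\|\varepsilon_x\|_2^2 = \|\varepsilon_y\|_2^2$ pathwise and hence in expectation. Next, by rotational invariance of the standard Gaussian, $y$ itself has i.i.d.\ $\Normal(0,1)$ coordinates; since MFP quantization is equivariant under signed coordinate permutations of its block (the scale $s=\max_i|y_i|$ is invariant and RTN on a symmetric grid is odd), the coordinates of $\varepsilon_y$ are exchangeable, giving $\E[(\varepsilon_y)_j^2]=\MSE(G)$ for every $j$ and therefore $\E\|\varepsilon_y\|_2^2=G\cdot\MSE(G)$. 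This already settles the rightmost equality.

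For the central equality $\E[(\varepsilon_x)_{I_\star}^2]=\tfrac{1}{G}\E\|\varepsilon_y\|_2^2$, I would write $(\varepsilon_x)_i=\langle r_i,\varepsilon_y\rangle$, where $r_i=\tfrac{1}{\sqrt G}H_{\cdot i}$ is the $i$-th column of $\tfrac{1}{\sqrt G}H$; the $\{r_i\}$ form an orthonormal basis with $|(r_i)_j|=1/\sqrt G$ uniformly. Expanding the square produces the diagonal contribution $\tfrac{1}{G}\|\varepsilon_y\|_2^2$ plus the off-diagonal cross terms $\tfrac{1}{G}\sum_{j\ne k}H_{j,I_\star}H_{k,I_\star}(\varepsilon_y)_j(\varepsilon_y)_k$, so the lemma reduces to showing the cross-term expectation vanishes. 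The intended mechanism is a conditional sign-averaging: conditional on the magnitude profile $(|y_1|,\dots,|y_G|)$, the signs $\sigma_j=\mathrm{sign}(y_j)$ are i.i.d.\ Rademacher, $(\varepsilon_y)_j$ inherits $\sigma_j$ by oddness of RTN, and $I_\star$ is determined by the sign pattern of the Hadamard projections in a way that makes the expected product of signs vanish.

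The main obstacle is precisely this sign-cancellation. Because $I_\star$ and $\varepsilon_y$ are both deterministic functions of the same $y$, independence is unavailable and one must unpack the dependence explicitly. The case $G=2$ is trivial, since the maximal $y$-coordinate is quantized exactly, forcing $(\varepsilon_y)_{J_\star}=0$ and killing every off-diagonal product pathwise; for $G>2$ only one entry of $\varepsilon_y$ vanishes, and the cancellation must be extracted from the joint distribution of the Hadamard projections. A cleaner route that sidesteps the explicit sign bookkeeping is to argue that the joint law of $\bigl(|x_i|,(\varepsilon_x)_i^2\bigr)_{i=1}^G$ is invariant under the permutation symmetries preserved by the Hadamard construction, which immediately forces $\E[(\varepsilon_x)_i^2\mid I_\star=i]=\tfrac{1}{G}\E\|\varepsilon_x\|_2^2$ for every $i$ and closes the argument.
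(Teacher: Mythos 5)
Your first two steps match the paper's proof: the orthogonality of $\tfrac{1}{\sqrt G}H$ gives $\|\varepsilon_x\|_2 = \|\varepsilon_y\|_2$ pathwise, and the exchangeability argument for $\E[(\varepsilon_y)_j^2]=\MSE(G)$ is a clean way to establish the rightmost equality. The central equality $\E[(\varepsilon_x)_{I_\star}^2] = \tfrac1G\E\|\varepsilon_y\|_2^2$ is where the two arguments diverge, and where yours has a genuine gap.

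You correctly identify the obstacle: $I_\star$ and $\varepsilon_y$ are both deterministic functions of the same $y$, so the cross-terms $\sum_{j\ne k}H_{j,I_\star}H_{k,I_\star}(\varepsilon_y)_j(\varepsilon_y)_k$ do not obviously vanish in expectation, and you do not actually close this. The paper resolves it not by a sign-cancellation or symmetry argument, but by invoking outright the standard modeling assumption that \emph{the quantization error $\varepsilon_y$ is statistically independent of the signal $y$} (the classical ``white quantization noise'' hypothesis). Under that assumption $\varepsilon_x$ is independent of $x$, hence of $I_\star$, so $\E[(\varepsilon_x)_i^2\mid I_\star=i]=\E[(\varepsilon_x)_i^2]$, and the law of total expectation with $P(I_\star=i)=1/G$ finishes the argument immediately. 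Your write-up never states or uses this assumption, which is the single ingredient that makes the proof go through.

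Your proposed ``cleaner route'' also does not repair the gap. Even granting that the joint law of $\bigl(|x_i|,(\varepsilon_x)_i^2\bigr)_i$ were exchangeable (which is itself dubious, since the Hadamard columns are a fixed, non-exchangeable family and one would need the quantization map to commute with $U P_\pi U^\top$ for every permutation $\pi$), exchangeability alone only gives that $\E[(\varepsilon_x)_i^2\mid I_\star=i]$ equals some constant $c$ independent of $i$, and that $\E[(\varepsilon_x)_i^2\mid I_\star=j]$ equals another constant $c'$ for $j\ne i$. One then has $\E[(\varepsilon_x)_{I_\star}^2]=c$ but $\tfrac1G\E\|\varepsilon_x\|_2^2 = \tfrac{c}{G}+\tfrac{(G-1)c'}{G}$, and these coincide only if $c=c'$ --- precisely the conditional-independence statement you were trying to avoid proving. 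So exchangeability does not ``immediately force'' the claimed equality; you would still need the decorrelation of $(\varepsilon_x)_i^2$ from the event $\{I_\star=i\}$, which is exactly what the paper's independence assumption supplies.
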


\begin{remark}[Outlier preservation]
By contrast, it is immediate that $\MSE_{\mathrm{top}}(G) = 0$ in the absence of the Hadamard rotation, since we are doing \texttt{absmax} scaling, which preserves the top element. 
\end{remark}

\paragraph{Asymptotic MSE Analysis.}
Thus, MSE is the key quantity we want to analyze. 
First, notice that, for any fixed grid with dead zone $\delta>0$, for both Laplace and Normal models,
$\lim_{G\to\infty}\MSE(G)=\Var(X_1)=1.$ 
Intuitively, this is because, as $G$ grows, the block maximum $M$ diverges, so $|U_1|=|X_1|/M\to 0$ in probability; the mass that survives the dead-zone vanishes. 
Consequently,  the dominant part of the MSE $\E[(X_1-\widehat{X}_1)^2]$ becomes $\E[X_1^2 \mathbf{1}\{|U_1|<\delta\}]\to \E[X_1^2]=1$. 

To get a more granular variant, we assume the large $G$ domain and examine the ``preserved mass'': 
\[
\mathcal{R}(G)\coloneqq 1-\MSE(G)=\E\big[X_1^2\,\mathbf 1\{|U_1|\ge \delta\}\big],
\]
which captures the mass that \emph{escapes} underflow. A precise calculation yields the following: 
\begin{lemma}[Rates]\label{lem:rates}
Let $\delta=q_{\min}/2\in(0,\tfrac12)$ be the dead-zone halfwidth in the normalized domain.
\begin{align*}
\text{For Laplace, we have:}~  \mathcal{R}_{\mathrm{L}}(G)=\Theta\big((\log G)^2 G^{-\delta}\big), 
\text{and for Normal:}~  \mathcal{R}_{\mathrm{N}}(G)=\Theta\big(\sqrt{\log G}\, G^{-\delta^2}\big).
\end{align*}
\end{lemma}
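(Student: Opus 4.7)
My approach is to reduce $\mathcal{R}(G)$ to a one-dimensional integral and then apply extreme-value (Gumbel) scaling. Let $Z_i=|X_i|$ and $M'=\max_{i\ge 2}Z_i$, which is independent of $Z_1$. Since $s=\max(Z_1,M')$ and $\delta<1$, the event $\{|U_1|\ge\delta\}=\{Z_1\ge\delta s\}$ collapses to $\{Z_1\ge\delta M'\}$: the case $Z_1\ge M'$ is automatic (then $s=Z_1$), while otherwise $s=M'$. Conditioning on $Z_1$ and using $\Pr[M'\le t]=F_Z(t)^{G-1}$,
\[
\mathcal{R}(G)=\E\!\bigl[Z_1^2\, F_Z(Z_1/\delta)^{G-1}\bigr]=\int_0^{\infty} z^2\, f_Z(z)\, F_Z(z/\delta)^{G-1}\,dz.
\]
All subsequent analysis is asymptotic evaluation of this one-dimensional integral.

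The kernel $F_Z(z/\delta)^{G-1}$ acts as a soft cutoff, close to $0$ below a threshold $z_G$ and close to $1$ above, where $z_G$ is determined by $G\bigl(1-F_Z(z_G/\delta)\bigr)\asymp 1$. For the exponential magnitude (Laplace case), $1-F_Z(w)=e^{-\lambda w}$ gives $z_G=\delta\log G/\lambda$. For the half-normal, the Mills-ratio asymptotic $1-F_Z(w)\sim\sqrt{2/\pi}\,e^{-w^2/2}/w$ gives $z_G\sim\delta\sqrt{2\log G}$. The rate of $\mathcal{R}(G)$ should then match, up to constants, that of the truncated second moment $\int_{z_G}^\infty z^2 f_Z(z)\,dz$. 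Integration by parts yields $\int_a^\infty z^2\lambda e^{-\lambda z}\,dz=e^{-\lambda a}(a^2+2a/\lambda+2/\lambda^2)$, which at $a=z_G$ is $\Theta\bigl((\log G)^2 G^{-\delta}\bigr)$; the Mills-type tail $\int_a^\infty z^2 \sqrt{2/\pi}\,e^{-z^2/2}\,dz\sim\sqrt{2/\pi}\,a\,e^{-a^2/2}$ at $a=z_G$ is $\Theta\bigl(\sqrt{\log G}\,G^{-\delta^2}\bigr)$, matching the two claims.

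To turn these heuristics into matching bounds I would argue as follows. Upper bound: split at $(1-\eta)z_G$ with $\eta\to 0$ slowly; on the lower piece, use $F_Z(z/\delta)^{G-1}\le\exp\!\bigl(-(G-1)(1-F_Z(z/\delta))\bigr)$, which is super-polynomially small since $(G-1)(1-F_Z((1-\eta)z_G/\delta))\to\infty$, so the contribution is $o(\mathcal{R}(G))$; on the upper piece, $F_Z^{G-1}\le 1$ leaves at most the target tail integral. Lower bound: apply the extreme-value change of variables that linearizes the kernel, $z=z_G+u/\lambda$ in the Laplace case and $z/\delta=b_G+u/b_G$ in the Normal case (with $b_G$ the precise Gumbel-normalized shift satisfying $G\sqrt{2/\pi}\,e^{-b_G^2/2}/b_G=1$), so that $F_Z(z/\delta)^{G-1}\to\exp(-e^{-u/\delta})$ and $\exp(-e^{-u})$ respectively. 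Taylor-expanding $z^2$, $f_Z(z)$, and $dz$ about $z_G$ and applying dominated convergence on any bounded $u$-window already produces a positive constant multiple of the target rate, matching the upper bound.

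The main technical obstacle will be the uniformity of the Gumbel approximation together with controlling the relative factor $z^2/z_G^2=1+o(1)$ only on a sub-$z_G$ window in the $z$-scale; contributions outside this window must be absorbed via the closed-form tail integrals above. A further subtlety in the Normal case is that the $1/w$ correction in Mills' ratio is exactly what supplies the $\sqrt{\log G}$ prefactor, so it must be retained throughout (replacing $1-F_Z(w)$ by $\sim e^{-w^2/2}$ alone would change the polylog order). A clean resolution in both cases is to sandwich the kernel by hard indicators $\mathbf 1\{z\ge z_G(1\pm\varepsilon_G)\}$ with $\varepsilon_G\to 0$ tuned so the closed-form tail integrals at the two thresholds agree to leading order, thereby establishing the $\Theta$-bounds.
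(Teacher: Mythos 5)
The paper does not include a proof of Lemma~\ref{lem:rates} in its appendix, so there is nothing to compare your argument against; I therefore evaluate it on its own terms. Your integral representation is correct and a genuinely useful step: the event $\{|U_1|\ge\delta\}$ collapses to $\{Z_1\ge\delta M'\}$ with $M'=\max_{i\ge 2}Z_i$ independent of $Z_1$, whence $\mathcal{R}(G)=\int_0^\infty z^2 f_Z(z)\,F_Z(z/\delta)^{G-1}\,dz$, and the Gumbel-scaling heuristics that follow locate $z_G$ and produce the claimed rates. The lower-bound argument via the extreme-value change of variables on a bounded $u$-window is sound.

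The gap is in the upper bound. After splitting at $(1-\eta)z_G$ and bounding $F_Z^{G-1}\le 1$ above the threshold, you need the lower piece to be $o\bigl((\log G)^2 G^{-\delta}\bigr)$ while the upper piece tail integral remains $\Theta\bigl((\log G)^2 G^{-\delta}\bigr)$. These two requirements cannot be met by any choice of $\eta$. On the lower piece the integrand's $z^2 f_Z$ mass is $\Theta(1)$ (it is essentially all of $\E[Z^2]$), so you must suppress it through the kernel value at $(1-\eta)z_G$, which is $\approx\exp\bigl(-(G-1)\,e^{-\lambda(1-\eta)z_G/\delta}\bigr)\approx\exp(-G^\eta)$ in the Laplace case. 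For this to be $o(G^{-\delta})$ you need $G^\eta\gtrsim \log G$, i.e.\ $\eta\gtrsim \log\log G/\log G$. But with any such $\eta$, the upper-piece tail $\int_{(1-\eta)z_G}^\infty z^2 f_Z\sim e^{\lambda\eta z_G}\cdot(\log G)^2 G^{-\delta}=G^{\eta\delta}(\log G)^2 G^{-\delta}$, and $G^{\eta\delta}\gtrsim(\log G)^{\delta}\to\infty$. So the two closed-form tail integrals at $z_G(1\pm\varepsilon_G)$ can never ``agree to leading order'' while also making the kernel small on the lower side; the ``clean resolution'' you propose at the end is not achievable. (The issue is that the soft cutoff $F_Z(z/\delta)^{G-1}$ still suppresses mass substantially on the interval just above your cut, and you discard that suppression when replacing the kernel by $\mathbf 1$.)

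The fix is to run the same Gumbel change of variables used for the lower bound also for the upper bound, and to verify a dominating integrable envelope uniformly in $G$: under $z=z_G+u\delta/\lambda$ (Laplace), the transformed integrand is $\lesssim z_G^2 G^{-\delta}\,e^{-\delta u}\min\{1,e^{-c e^{-u}}\}$ for some $c>0$, which is integrable on $\mathbb{R}$; the analogous envelope works in the Normal case. Dominated convergence then yields matching upper and lower constants and hence the $\Theta$-bound without ever resorting to a hard-indicator sandwich. As a secondary note, in the Normal case the Mills-ratio correction $1/w$ that you yourself flag actually shifts $z_G$ by a $\log\log G$ term and introduces an extra $(\log G)^{\delta^2/2}$ prefactor in the exact Gumbel computation; your heuristic (like the lemma statement) suppresses this sub-leading factor, which is worth acknowledging explicitly if one wants a sharp $\Theta$.
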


\paragraph{Discussion.} 
Since $0<\delta^2<\delta<1$, we have that, for small $G$, the Laplace MSE should be below the MSE for the Normal distribution. Yet, for sufficiently large $G$, the Normal rate dominates the Laplace rate, meaning that $\MSE_{\mathrm{N}}(G)<\MSE_{\mathrm{L}}(G)$. As such, we predict a \emph{crossover} phenomenon, where the MSE gap in favor of the (native) Laplace distribution will be \emph{inverted} for larger group size $G$ in favor of the transformed Normal distribution. In short, transforms should \emph{hurt} the original weights at small group sizes, and become effective as we increase it.

\begin{figure}[t]
  \vspace{-0.5cm}
  \centering
  \begin{minipage}[t]{0.69\textwidth}
    \centering
    \includegraphics[width=\linewidth]{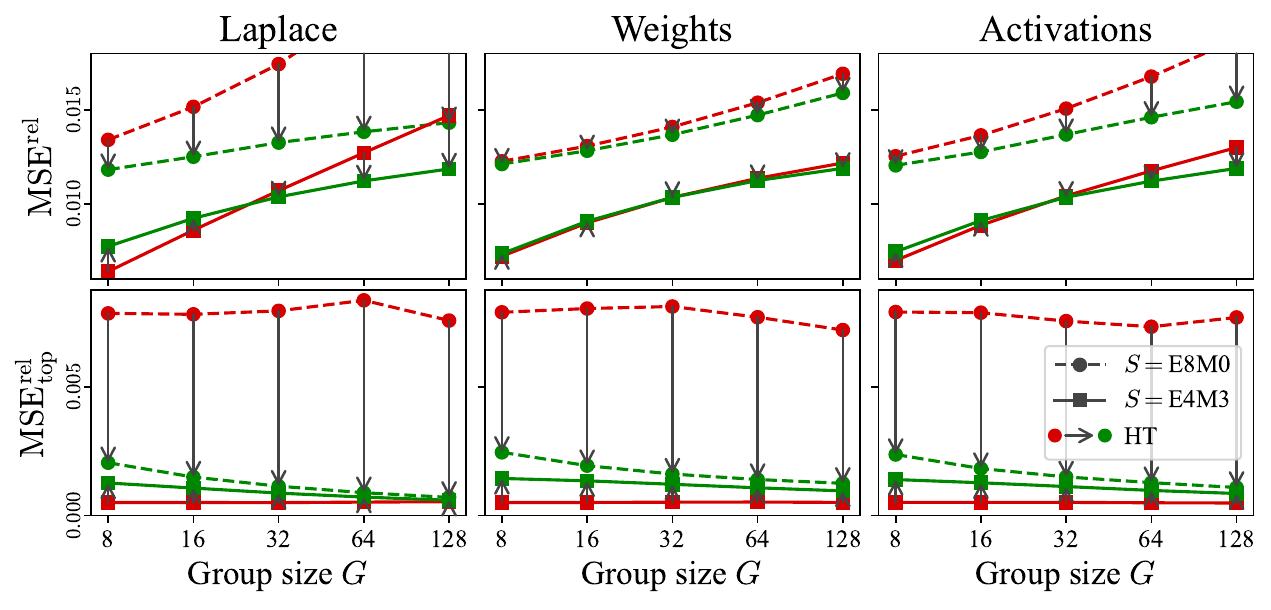}
    \caption{The effect of Hadamard Transform (HT) on MXFP4 (E8M0) and NVFP4 (E4M3) quantization on Laplace distribution samples and Llama-3.1-8B-Instruct weights and activations for various group sizes.}
    \label{fig:rqe_mape_llama_laplace}
  \end{minipage}\hfill
  \raisebox{-1mm}{%
    \begin{minipage}[t]{0.28\textwidth}
      \centering
      \includegraphics[width=\linewidth]{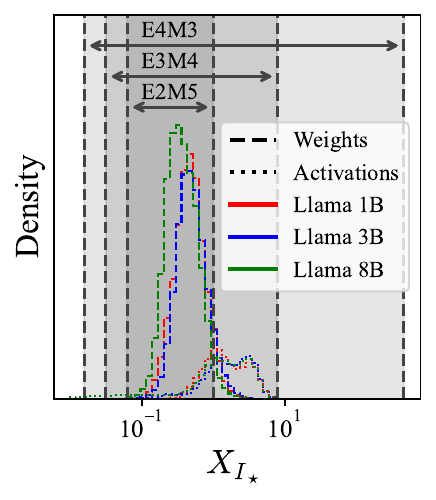}
      \vspace{-5mm}
      \caption{Ranges of FP8 scale format and observed weight and activation magnitudes.}
      \label{fig:scale_ranges}
    \end{minipage}
  }
  \vspace{-1em}
\end{figure}

\subsection{Numerical Validation}
\label{sec:numerical_validation}

\paragraph{Relative Errors.}
In practice, the weight and activation distributions are \emph{not of unit variance}. Shared scales give us control over the variance during the quantization process, but the aggregation of the proposed quadratic errors will be dominated by groups with higher variance. To address this, when analyzing real weights and activations, we use the relative version of the errors proposed above.

\begin{definition}[Relative Metrics]\label{def:rel_metrics}
Let $I_\star=\arg\max_{1\le i\le G}|X_i|$ be the top group element. We define the \emph{relative per-element MSE} as $\MSE^{\mathrm{rel}}(G) \coloneqq \E [{\sum\limits_{i=1}^G(X_i-\widehat{X}_i)^2}/{\sum\limits_{i=1}^GX_i^2}]$, and the \emph{top-element MSE per block}:
$\MSE^{\mathrm{rel}}_{\mathrm{top}}(G) \;\coloneqq\; \E [{(X_{I_\star}-\widehat{X}_{I_\star})^2}/{X_{I_\star}^2}].$ 
\end{definition}

$\MSE^{\mathrm{rel}}$ is a key metric in compression theory~\citep{5311476}; in the context of LLM compression,~\citet{malinovskii-etal-2025-higgs} to present a linear dependence between $\MSE^{\mathrm{rel}}$ and end-to-end accuracy decline. Additionally, recent lattice-based PTQ methods explicitly optimize for $\MSE^{\mathrm{rel}}$ when designing their lattice~\citep{tseng2024quipbetterllmquantization, Tseng2024_QTIP, malinovskii-etal-2025-higgs}. For $\MSE^{\mathrm{rel}}_{\mathrm{top}}$, Lemma~\ref{lem:outliers_mape} shows how it accurately reflects the outliers' relative error as long as outliers are large, rare, randomly positioned, and $\MSE^{\mathrm{rel}}_{\mathrm{top}}$ is consistent for outliers and non-outliers (as shown by the shared scale quantization analysis below).

Figure~\ref{fig:rqe_mape_llama_laplace} validates the analysis from Section~\ref{sec:analytical_bounds} on samples from Laplace distribution, as well as on real weight and activation matrices from the Llama-3.1-8B-Instruct model. For $\MSE^{\mathrm{rel}}$ (top row) and NVFP4 ($G$ = 16),  the Hadamard Transform has a \emph{negative effect} for small $G$ and a \emph{positive effect} for larger $G$, exactly as predicted. To interpret the other effects, we have to better understand the effect of the shared scales quantization.

\paragraph{Shared Scales Quantization.}
Under fixed bit-width, microscaling floating point formats with a shared scale (stored, e.g., in E8M0 or E4M3) trade range for accuracy. We begin our analysis by examining the range required to fully cover  weights and activations. 

Figure~\ref{fig:scale_ranges} shows the logarithmic dynamic ranges of several FP8 formats and compares them with the empirical distributions of shared scales for weights and activations across multiple models. One can see that the dynamic range of $S=\text{E4M3}$ covers the full range of these distributions. Trivially, $S=\text{E8M0}$, having more range, can easily cover it too.
When shared scales range is less than the dynamic range of $S$, they can always be represented by normal floating-point values with their relative error \textbf{(a)} bounded by $2^{-M}$ for mantissa precision $M$ and \textbf{(b)} translation-invariant to power-of-two shifts. For \texttt{absmax} quantization without rotations, this leads to $\MSE^{\mathrm{rel}}_{\mathrm{top}}$'s being insensitive to the shared scale magnitude in expectation over high dynamic range intervals, and, as the results, to $G$. We formalize this in Lemma~\ref{lem:mape_consistency}.

This allows us to explain the effects of shared scale quantization on $\MSE^{\mathrm{rel}}_{\mathrm{top}}$ by relating it to the precision of the shared scales data type $S$ and the base data type $E$. We observe the following: 

\textbf{(1) For MXFP4, top values inherit their precision from the base data type, and not the shared scale data type}.
    This is because $S=\text{E8M0}$ is \emph{coarser} than $E=\text{E2M1}$, leading to shared scales inheriting effectively constant relative error from E2M1 regardless of $G$, as visible in Figure~\ref{fig:rqe_mape_llama_laplace}.
\textbf{(2) By contrast, for NVFP4, shared scales inherit effectively constant relative error, regardless of $G$}. This is because
 $S=\text{E4M3}$ is \emph{finer} than $E=\text{E2M1}$, as visible in Figure~\ref{fig:rqe_mape_llama_laplace}.
\textbf{(3) Once the Hadamard Transform is applied, the maximum element error is spread across the whole group.} This follows Lemma~\ref{lem:top-element}. From Figure~\ref{fig:rqe_mape_llama_laplace}, one can see that this leads to better precision than pure E2M1, but worse than pure E4M3. Moreover, one can see that for heavy-tailed distribution, such as Laplace or the observed model tensors, $X_{I_\star}^2$ grows faster than $\MSE(G)$ with $G$, leading to the error being \emph{reduced} as we increase the group size $G$. Yet, this effect alone is not enough for it to improve over the E4M3 precision for reasonable group size $G$.

\paragraph{Discussion.}
Our analysis so far showed that the MXFP4 format induces higher MSE for RTN quantization relative to NVFP4, and is worse at outlier preservation. At the same time, the format has lower memory and computational costs relative to NVFP4, and is likely to benefit from normalization via the Hadamard transform. 
By contrast, the NVFP4 format has \emph{lower MSE} due to the smaller group size, and \emph{top value preservation} as it is ``promoted'' to E4M3. In addition, the NVFP4 MSE may not benefit from normalizing transforms. 
In the following, we incorporate our analysis  into the classic GPTQ algorithm, obtaining a variant that is designed for FP4 formats, called \methodname{}.

\section{\methodname{}: An FP4-Focused Variant of the GPTQ Algorithm}

\paragraph{Standard GPTQ.}
Given a layer's weights ${W}$ and calibration inputs ${X}$, GPTQ~\citep{gptq} aims to find quantized weights ${\widehat{W}}$ that minimize the output reconstruction error: $\| {X} {\widehat{W}} - {X} {W} \|_{2}^{2}$. Assuming a fixed quantization grid, GPTQ builds upon the Optimal Brain Quantization (OBQ) framework~\citep{frantar2022obc} to iteratively quantize and update remaining weights to compensate for the error leveraging second-order information, while avoiding OBQ's high computational complexity.
Specifically, while OBQ employs a dynamic, greedy weight selection strategy for selecting the next weight to quantize, GPTQ observes that this greedy approach offers low benefits over quantizing weights in an arbitrary, fixed order, for heavily-parameterized layers. Thus, GPTQ quantizes weights across \emph{all rows} in the same fixed order. This enables it to share the Hessian information, used to compute error updates, among rows. 
GPTQ typically implements this fixed order by processing the dimensions sequentially, column-by-column (front-to-back). The inverse Hessian must be updated only once per column ($d_\text{col}$ times) rather than once per weight ($d_\text{row} \cdot d_\text{col}$ times), which reduces the overall computational complexity from $O(d_\text{row} \cdot d_\text{col}^3)$ for OBQ, to $O(\text{max} \, \{d_\text{row} \cdot d_\text{col}^2, d_\text{col}^3\})$, providing orders-of-magnitude speedup, for a weight matrix of size $d_\text{row} \times d_\text{col}$.

\subsection{Adapting GPTQ to FP4 Formats} 
\label{sec:algorithm}

Our analysis showed that, with RTN quantization, NVFP4 provides lower MSE relative to MXFP4, due to better outlier preservation and smaller group size.   
GPTQ induces an orthogonal direction in the design space, relative to RTN, as it allows for MSE error to be ``corrected'' by shifting it to other weight blocks. This suggests three general solution strategies: (1) \textbf{GPTQ applied to the standard NVFP4 grid}, with \texttt{absmax} scaling, leveraging the natural properties of NVFP4. This simply extends RTN to GPTQ; 
(2) \textbf{\methodname{}-MXFP4}: GPTQ applied to the MXFP4 grid, on \emph{rotated} weights and activations, as this reduces MSE for RTN; 
(3) \textbf{\methodname{}-NVFP4}: GPTQ on an \emph{MSE-optimized} NVFP4 grid, with \emph{rotated} weights and activations.



While the first two approaches follow naturally from our analysis, the third approach wagers that the higher per-group local MSE caused by applying Hadamard rotations to NVFP4 can be compensated by optimizing the scales, together with the GPTQ updates. As such, options 2 and 3 would offer a unified rotated/normalized format, that would apply to both NVFP4 and MXFP4. Next, we describe three key technical additions to the GPTQ algorithm that help bridge the gap between variants.


\paragraph{Ingredient 1: MSE-Optimized Grids.} 
Our first step in \methodname{} is to identify a good initial grid. 
Recall that NVFP has both tensor (global) and per-group scales, which we denote by $s_T$ and $s_G$, respectively. The quantized variant of the element $X_i$ will be represented as $\hat{X_i} = s_T \cdot  s_G \cdot Q ( X_i / (s_T \cdot s_G) )$, where $Q$ is the quantization operation. 
To minimize error, we solve the following optimization problem for each tensor, across its groups:  
$\min_{s_T, s_{G_1}, \ldots, s_{G_k}} \sum_i [ \|\hat{X_i} - X_i \|_2^2$, where $(s_{G_i})_{i = 1, k}$ are the quantization scales for the $k$ groups. 
We solve this by using alternating optimization over the block scales and the per-tensor scale, respectively. 
For NVFP4 without rotations, we have found this to yield consistent improvements. For MXFP4 with rotations, we have found that a single static value works stably across all layers, and we therefore use this approach in our implementation.

\paragraph{Ingredient 2: \emph{Static} Activation Reordering.} 
The original GPTQ algorithm heuristically re-orders the weight columns following the ``dynamic act-order'', i.e., a descending order of the corresponding Hessian diagonal entries. This matrix shuffle is applied before the quantization grid and scales are computed. 
While this consistently improves accuracy, it also requires re-shuffling the matrix columns dynamically at runtime, which results in a 10-20\% end-to-end inference slow-down. 

Instead, we observe that we can apply the activation re-ordering \emph{statically}, i.e. \emph{after} the scales and the quantization grid have been computed in the first step, based on the original (arbitrary) column order. 
In practice, we first fix the grid and scales for each group,  shuffle the columns before GPTQ is applied, and then finally shuffle the columns back, maintaining the microscaling group structure of the original matrix. Importantly, this benefits from the improved behaviour during the quantization process itself, without any runtime penalties. 
This can be applied to GPTQ over any grid, and provides similar improvements to standard ``dynamic'' act-order, without the runtime overheads. 

\paragraph{Ingredient 3: Fused Online Rotations.}
Our \methodname{} variants rotate the weights and activations via a block-wise Hadamard transform $H_k$, with $k \times k$ diagonal blocks, where $k$ is a power-of-two. Mathematically, for a linear layer with weights $W$ and activations $X$, both quantized, the operation occurs as $Q(W H_k) Q(X H_K)^T$,
where 
$H_k$ is the block-wise rotation, and $Q$ is the quantization function. 
In the next section, we describe how this format can be supported efficiently at runtime. 

\paragraph{MXFP scale fitting} Additionally, for MXFP in Appendix~\ref{sec:mxfp_scale_fitting} we propose a simple scale-fitting modification, mapping the excessively large E8M0 range into data range that significantly improves performance of MXFP quantized models.

\subsection{GPU Kernel Support for \methodname{} via QuTLASS}
\label{sec:fptq_qutlass}
 
To support the methods described above, we introduce a set of kernels optimized for NVIDIA Blackwell GPUs. 
These kernels constitute QuTLASS v1.0, a high-performance library for low-precision deep learning quantization, building on NVIDIA CUTLASS~\cite{cutlass_library}.
QuTLASS provides full support for quantization- and matmul-related operations in both NVFP4 and MXFP4 micro-scaling formats.
In addition, we release architecture-optimized implementations for different NVIDIA Blackwell compute capabilities, namely SM100~\cite{B200_whitepaper} and SM120~\cite{RTX_whitepaper}.
The kernels in QuTLASS can be grouped into two categories, which will handle the computation of $Q(W H_k) Q(X H_k)^T$:

\paragraph{1. Quantization-related kernels.}  While the product $\mathbf{W H_k}$ is pre-fused in the weights, $\mathbf{X H_k}$ occurs online. To avoid diminishing the benefits of FP4 hardware acceleration, QuTLASS provides lightweight fused kernels for online rotation.
These kernels support ``unimodal'' $k \times k$ block diagonal matrices with $k \in \{16,32,64,128\}$.
For $k<256$, dense transformations remain memory-bound, meaning that~\emph{any} rotation (not just Hadamards) can be applied at essentially the same cost, as the full matrix can be loaded at runtime (e.g., see Tables~\ref{tab:llama3-8B_otherTransformations_nvfp4} and~\ref{tab:llama3-8B_otherTransformations_mxfp4}). 
To further reduce overhead, quantization and scale calculation are fused into the transformation kernel as a custom epilogue function. 
QuTLASS currently supports MSE and Abs-Max quantization methods, while its template-based design allows new methods to be easily integrated.

\paragraph{2. Matmul-related narrow precision kernels.} Between FP4 quantization and matrix multiplication, a hardware-mandated rearrangement of scaling factors is required~\cite{LayoutSF_blackwell} for \texttt{tcgen05.mma}.
QuTLASS implements this step using a Triton kernel. 
For the matmul itself, QuTLASS supports multiple backends, including CUTLASS~\cite{cutlass_library} and FlashInfer~\cite{ye2025flashinfer}, enabling flexible plug-and-play backend selection depending on workload and hardware.

\section{Experimental Results}


\paragraph{1. Experiments with Emulated Quantization.} 
We first evaluate the highly-popular Llama 3.1-8B-Instruct model~\citep{dubey2024llama}, examining the impact of quantizing both weights and activations for all linear layers in this model to the INT4 and FP4 formats, using different algorithms. 
To ensure compatibility, experiments are performed using simulated quantization in PyTorch. 
We use a subset of tasks from the Open LLM Leaderboard V1~\cite{open-llm-leaderboard-v1} for evaluation: GSM8K for grade school math~\cite{cobbe2021training}, MMLU for world knowledge and reasoning~\cite{hendrycks2020measuring,clark2018think}, Winogrande and HellaSwag for language understanding~\cite{sakaguchi2021winogrande, zellers2019hellaswag}. 
(Other tasks in this harness yield similar scores across top methods.)  
The INT4 experiments use group size 32 with FP16 scales, matching the average bit-width of NVFP4. 

\paragraph{Algorithms.} 
We consider both weights-and-activations quantization (W4A4, our main focus) and weight-only quantization (W4A16, as a ``control''). For W4A4, we implement the following: (1) \textbf{Round-to-nearest (RTN)} quantization to the corresponding format, with \texttt{absmax} scales. In addition, we add Hadamard rotations matching the quantization group size (32), denoted as \textbf{RTN + HT}.  (2) \textbf{SmoothQuant}~\cite{smoothquant} diagonal rescaling, with a tuned $\alpha$ smoothening factor. We identified $\alpha = 0.6$ to be the best in our experiments. 
(3) \textbf{QuaRot}~\cite{quarot}, which adds Hadamard rotations strategically at each linear layer. These should reduce quantization error, and most of them can be folded into the model. We use RTN for quantization post-rotation. 
(4) \textbf{SpinQuant}~\cite{spinquant}, which adds trainable rotations to the model, similarly to QuaRot. A subset of 1024 calibration sequences from FineWeb is used for training the matrices. 
(5) \textbf{GPTQ}~\cite{gptq} weight quantization and RTN on the activations, with \texttt{absmax} scales. A subset of 1024 calibration sequences from FineWeb, \texttt{absmax} scales, standard Hessian dampening factors ($\lambda=10^{-2}$), and standard quantization order are used. (6) \textbf{\methodname{}} weight quantization, i.e., GPTQ with block rotations, MSE scale optimization, and static activation re-ordering over the rotated weights, as described in Section~\ref{sec:algorithm}, with RTN on the activations. 
As a control, we also implement \textbf{weight-only quantization}, via \textbf{RTN}, \textbf{GPTQ}, \textbf{AWQ}~\cite{lin2024awq}, as well as Hadamard rotations followed by RTN, denoted as \textbf{RTN + HT}. These results closely follow our findings for W\&A quantization, and are thus deferred to the Appendix. 
In Appendix~\ref{app:transforms}, we perform an exhaustive sweep over DCT, DST, Hadamard, and GSR transforms and block sizes  showing that the Hadamard transform matching the quantization group size provides the best results on average. 


\paragraph{Discussion.} The accuracy results for W4A4 experiments on  Llama-3.1-8B-Instruct are presented in Table~\ref{tab:unified-quantization-comparison-simulation}. The variance for the NVFP4 experiments (i.e., for entries in the 7th column over 5 distinct seeds) is of approximately 0.3 average points, whereas the variance for the INT4 experiments is of approximately 1 point. We mark all top aggregate entries (within 2 standard deviations) as bold in the corresponding columns. We observe the following:

\begin{table}[t]
    \small
    \centering
    \resizebox{\textwidth}{!}{
    \begin{tabular}{l l c c c c c c}
    \toprule
    Format & Method & MMLU-CoT & GSM8k & HellaSwag & WinoGrande & Avg. & Recovery \% \\
    \midrule
    Baseline & FP16 & 72.76 & 85.06 & 80.01 & 77.90 & \textbf{78.93} & \textbf{100} \\ 
    \midrule
    \multirow{2}{*}{INT8}
    & RTN & 72.50 & 84.80 & 80.20 & 77.40 & \textbf{78.73} & \textbf{99.74} \\
    & GPTQ & 72.40 & 84.40 & 80.00 & 77.30 & \textbf{78.53} & \textbf{99.48} \\
    \midrule
    \multirow{2}{*}{FP8}
    & RTN & 72.40 & 84.70 & 79.80 & 77.70 & \textbf{78.65} & \textbf{99.64} \\
    & GPTQ & 71.80 & 84.50 & 79.90 & 78.10 & \textbf{78.58} & \textbf{99.55} \\
    \midrule
    \midrule
    \multirow{3}{*}{INT4} 
    & RTN & 65.96 & 74.68 & 77.62 & 74.19 & 73.11 & 92.63\\
    & RTN+HT & 68.30 & 79.61 & 77.60 & 73.48 & \textbf{74.75} & \textbf{94.71} \\
    & GPTQ & 66.36 & 76.65 & 77.38 & 72.48 & 73.21 & 92.75 \\
    \midrule
    \multirow{7}{*}{NVFP4} 
    & RTN & 68.26 & 78.39 & 78.15 & 74.11 & 74.73 & 94.67 \\
    & RTN + HT & 67.41 & 78.01 & 77.31 & 73.48 & 74.05 & 93.82 \\
    & QuaRot & 66.50 & 77.40 & 77.25 & 75.14 & 74.10 & 93.80 \\
    & SpinQuant & 66.50 & 76.10 & 76.96 & 75.32 & 73.70 & 93.40 \\
    & SmoothQuant & 68.90 & 79.50 & 79.50 & 74.70 & \textbf{75.70} & \textbf{95.90} \\
    & GPTQ & 68.85	& 82.60	 & 78.26 & 74.51 & \textbf{75.72} &	\textbf{95.92}  \\
    & \methodname{} & 69.12 & 80.80  & 78.17 & 75.24 & \textbf{75.84} & \textbf{96.08} \\ 
    \midrule
    \multirow{7}{*}{MXFP4} 
    & RTN & 62.21 & 67.85 & 73.99 & 73.24 & 69.32 & 87.83 \\
    & RTN + HT & 62.38 & 72.48 & 75.29 & 71.67 & 70.45 & 89.26 \\
    & SmoothQuant & 63.93 & 68.54 & 75.10 & 73.56 & 70.30 & 89.06 \\
    & QuaRot & 49.86 & 56.94 & 73.50 & 71.43 & 62.90 & 79.70 \\
    & SpinQuant & 61.80 & 68.16 & 74.87 & 72.93 & 69.40 & 88.00 \\
    & GPTQ & 63.49 & 68.46 & 76.01 & 74.51 & 70.62 & 89.47 \\
    & \methodname{} & 67.19	&	75.70	& 76.91 & 74.80 & 	\textbf{73.65} & \textbf{93.31} \\
    \midrule
    \midrule
    \multirow{4}{*}{NVINT4} 
    & RTN & 68.56 & 78.17 & 78.64 & 75.14 & 75.13 & 95.18 \\
    & RTN + HT & 68.59 & 81.73 & 78.38 & 74.27 & 75.74 & 95.96 \\
    & GPTQ & 68.69 & 81.58 & 77.59 & 73.40 & 75.32 & 95.42  \\
    & \methodname{} & 69.71 & 82.26 & 79.14 & 75.53 & \textbf{76.66} & \textbf{97.12} \\ 
    \midrule
    \multirow{4}{*}{MXINT4} 
    & RTN & 55.06 & 56.79 & 72.06 & 68.27 & 63.05 & 79.87 \\
    & RTN + HT & 58.44 & 61.64 & 73.94 & 71.19 & 66.30 & 84.00 \\
    & GPTQ & 61.22 & 67.70 & 75.04 & 71.67 & 68.91 & 87.30  \\
    & \methodname{} & 65.48 & 74.83 & 76.63 & 73.09 & \textbf{72.51} & \textbf{91.86} \\ 
    \bottomrule
    \end{tabular}
    }
    \caption{Unified accuracy comparison of Llama-3.1-8B-Instruct W4A4 under different quantization formats and methods. For each format, top methods within variance are marked in bold.}
    \label{tab:unified-quantization-comparison-simulation}
    \vspace{-1em}
\end{table}

\textbf{(1) No Lossless Format}: Across all formats, the accuracy drop is noticeable. The lowest average drop is for the NVFP4 format with SmoothQuant, GPTQ, or \methodname{} (these results are within variance of each other). 
The weight quantization results (Appendix Table~\ref{tab:llama-3_1-8b-instruct-weight_only}), 
show that the induced error is roughly evenly split between weight and activation quantization. 
    These results suggest that micro-scaling is not a direct solution for accuracy recovery. \textbf{(2) NVFP4 provides the best accuracy, with INT4 second, and MXFP4 third}:  On average, NVFP4 and INT4 quantization provide similar quality, with INT4 quantization having higher variance. 
    The MXFP4 format is a distant third in terms of accuracy, regardless of the method used, but benefits significantly from \methodname{}. 
     \textbf{(3) Quantization Method Efficiency:} 
    First, we note the good performance of standard RTN for INT4 (with rotations) and NVFP4 (without).    
    Second, the Hadamard transform appears effective for INT4 and MXFP4 (which use group size 32), but is less effective for NVFP4 (which uses group size 16), confirming our analysis. 
    In particular, for round-to-nearest quantization, adding the Hadamard transform to NVFP4 \emph{hurts} accuracy. 
    Finally, the GPTQ and SmoothQuant methods appear to be consistently---but moderately---effective across all three formats.
    

\paragraph{2. Real Quantization.} We integrate our kernels in vLLM~\citep{vllm}, 
and perform accuracy evaluations directly in this setup over additional models, such as Llama-3.3-70B-Instruct~\cite{dubey2024llama}, and the Qwen3~\cite{yang2024qwen25} family of models. The results are presented in Table~\ref{tab:real_recoveries_wrap} and Figure~\ref{fig:recovery}. For this experiment, we also provide results for Quantization-Aware Training (QAT) performed using the balanced Generalized Jensen-Shannon Divergence loss~\citep{englesson2021generalizedjensenshannondivergenceloss,lee2025unifyingblockwiseptqdistillationbased} between the quantized and the unquantized (frozen) model token distributions on a subset of 92,995 samples (10\%) from the T\"{u}lu 3~\citep{lambert2024tulu3} instructions dataset.
The results show that accuracies measured over real kernels for the Llama-3.1-8B-Instruct model track closely with the results from simulation, with slightly lower recoveries (within 0.2-0.3\%). 
Smaller models (< 8B) and Llama-family models tend to have lower recovery rates, whereas Qwen3 models can achieve more than 99\% average recovery in NVFP4. For NVFP4, standard GPTQ provides the highest recoveries on average, although RTN and MR-GPTQ are also competitive, with QAT only providing very limited benefits. For MXFP4, MR-GPTQ provides the best recovery among PTQ methods, while QAT consistently reduces the gap to full precision. 
Additionally, in Appendix~\ref{sec:platinumbench} we analyze differences between GPTQ variants on the less noisy PlatinumBench benchmark~\citep{vendrow2025largelanguagemodelbenchmarks}.

\paragraph{3. Integer Microscaling Formats.} Although integer-based microscaling formats are not specified in the OCP standard~\citep{mxfp}, one could easy extend the idea to them. For validation purposes, we propose the two following microscaling INT4-based formats: \emph{NVINT4} and \emph{MXINT4} that are identical to NVFP4 and MXFP4 respectively except for the base type being INT4 instead of FP4. The error and outlier preservation analysis in Figure~\ref{fig:rqe_mape_llama_laplace_int} predicts that NVINT4 should profit from HT and yield performance superior to all other formats. Pseudo-quantization evaluations in Tables~\ref{tab:unified-quantization-comparison-simulation},~\ref{fig:recovery} and Figure~\ref{fig:recovery} verify that for 0-shot (RTN) and 1-shot (MR-GPTQ) quantization --- a finding similar to that of~\citet{chen2025intvsfpcomprehensive}.

\begin{figure}[h!]
  \centering
  
  \begin{minipage}[c]{0.42\textwidth}
    \centering
    \includegraphics[width=\linewidth]{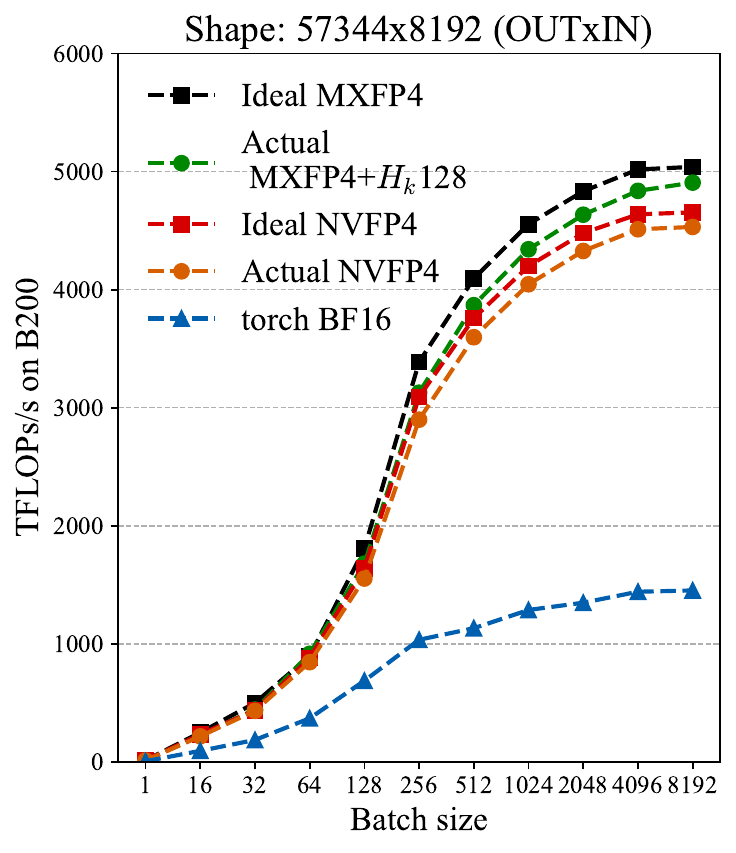}

    \captionof{figure}{QuTLASS performance for a single linear LLM layer.}
    \label{fig:qutlass_b200_layer}
  \end{minipage}%
  \hfill
  \begin{minipage}[c]{0.55\textwidth}
    \includegraphics[width=\linewidth]{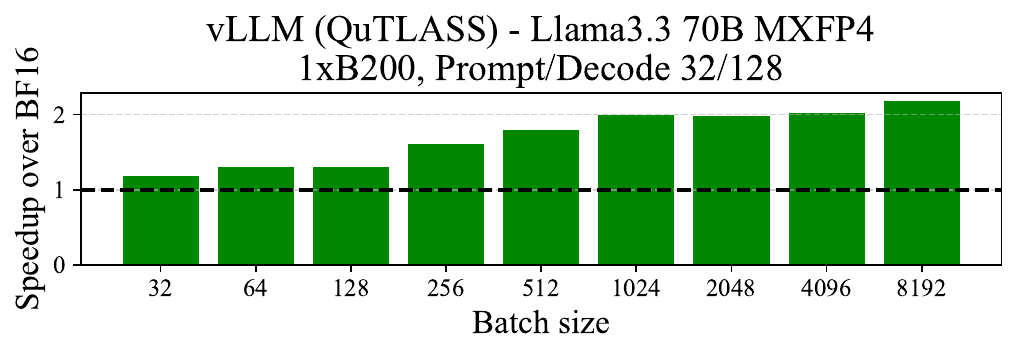}
    
    \captionof{figure}{QuTLASS performance end-to-end using our vLLM integration.}
    \label{fig:qutlass_b200_e2e}
    
    \includegraphics[width=\linewidth]{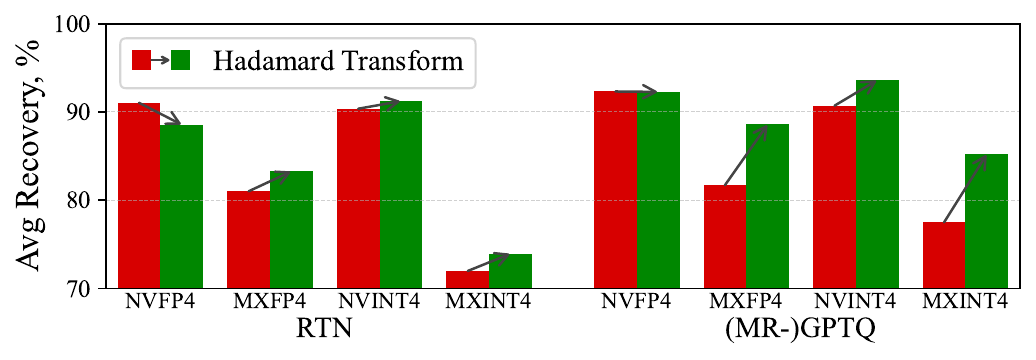}
    \captionof{figure}{Average (over Llama 3 1,3,8B) recoveries for microscaling formats and the effect of HT.}
    \label{fig:recovery}
    
  \end{minipage}

\end{figure}


\paragraph{4. Kernel and Inference Performance.} Finally, in Figures~\ref{fig:qutlass_b200_layer} and~\ref{fig:qutlass_b200_e2e}, we examine the performance of our kernels. 
Figure~\ref{fig:qutlass_b200_layer} shows throughput for a single layer extracted from a Llama-3.3-70B model using FlashInfer as a backend. The curve labeled with ``ideal'' represents the upper bound for a real 4-bit weight and 4-bit activation matrix multiplication, i.e., the measured matmul throughput not including the overhead of quantization-related operations. In contrast, the curves labeled ``actual'' show real measurements including the costs of Hadamards, quantization, and scale computation. 
The comparison highlights the small gap between idealized efficiency and practical implementations with our kernels, with speedups of up to $\approx 3.6\times$ (out of $4\times$) on B200 and $\approx 6\times$ (out of $8\times$) on RTX5090.
Interestingly, MXFP4 \textit{outperforms} NVFP4 on B200, with up to $\approx 15\%$ higher throughput, despite their closely related numerical formats. Possible contributing factors include MXFP4’s use of potentially more efficient power-of-two scales as well as larger group sizes, which could reduce overhead.

Figure~\ref{fig:qutlass_b200_e2e} shows end-to-end speedup of vLLM running Llama-3.3-70B with MXFP4 quantization compared to the baseline BF16 implementation on a single B200 GPU. The results demonstrate consistent performance gains across batch sizes, with speedups reaching up to $2.2\times$ over the BF16 baseline, and nearly $4\times$ on an RTX 5090 GPU (see Appendix~\ref{apx:qutlass} for more details).

\section{Conclusion}

We present a first comprehensive study of the recently introduced  MXFP4 and NVFP4 formats for LLM quantization, revealing gaps between the promise of these formats and their performance using state-of-the-art methods. To bridge these gaps, we introduce Micro-Rotated-GPTQ (\methodname{}), a novel GPTQ variant adapted to these formats. We support this approach with QuTLASS, a suite of high-performance GPU kernels that implement MR-GPTQ's micro-rotations with negligible overhead. We hope that our results will provide a basis and a motivation for future work on improving accuracy for these novel formats.

\newpage
\section*{Reproducibility statement}

\paragraph{Code.} The source code required to reproduce the quantization and accuracy‑evaluation experiments is available in the \href{https://github.com/IST-DASLab/FP-Quant}{FP-Quant} repository. The implementation of QuTLASS can be accessed \href{https://github.com/IST-DASLab/qutlass}{here}.  

\paragraph{Models.} MR‑GPTQ quantized models are hosted in a dedicated \href{https://huggingface.co/collections/ISTA-DASLab/mr-gptq-68dcde4b1e4b572ded89dbf3}{Hugging Face collection}. 

\section*{Acknowledgments}

We thank members of the Red Hat AI team, particularly Rob Shaw, Michael Goin, and Kyle Sayers, for their support and useful suggestions. 
We would like to thank Lambda Cloud for their generous computational grant. 
We thank the NVIDIA and Google corporation for their grants, which supported part of this research. 
Roberto L. Castro and Andrei Panferov were supported in part by the BILAI Cluster of Excellence program (GreenAI), whereas Vage Egiazarian and Eldar Kurtic were supported by the ERC Proof-of-Concept Grant FastAI. This work was supported under project ID 40 as part of the Swiss AI Initiative, through a grant from the ETH Domain and computational resources provided by the Swiss National Supercomputing Centre (CSCS) under the Alps infrastructure.

\bibliography{References}
\bibliographystyle{References}

\appendix
\newpage
\section*{Technical Appendices and Supplementary Material} 
\label{sec:appendix}

\section{Weight-Only Quantization Results}
\label{app:weight-only}

The results for weight-only quantization are provided in Table~\ref{tab:llama-3_1-8b-instruct-weight_only}. One can observe that similary to the weight and activation quantization case INT4 and NVFP4 perform similarly, while MXFP suffers much significant accuracy drop. Even for weight-only case there is ~2\% accuracy drop on average relative to the original model.

\begin{table}[!htb]
    \centering
    \small
    \setlength{\tabcolsep}{4pt}
    \renewcommand{\arraystretch}{1.1}
    \begin{tabular}{l c c c c c | c c}
    \toprule
    Format & Quantization & MMLU & GSM8k & HellaSwag & WinoGrande & Avg. & Recovery\% \\ 
    \midrule
    FP16 & - & 72.80 & 85.10 & 80.00 & 78.90 & 79.70 & -- \\ 
    \midrule
    \multirow{3}{*}{INT4} 
    & RTN & 69.38 & 81.80 & 79.41 & 77.90 & \textbf{77.12} & \textbf{97.71} \\
    & RTN+Had & 70.27 & 82.56 & 79.18 & 76.64 & \textbf{77.16} & \textbf{97.76} \\
    & GPTQ & 70.25 & 80.52 & 79.01 & 76.64 & \textbf{76.60} & \textbf{97.05 }\\
    \midrule
    \multirow{5}{*}{NVFP4} 
    & RTN & 70.64 & 82.26 & 79.24 & 77.35 &\textbf{ 77.37} & \textbf{98.02} \\
    & RTN+Had & 69.26 & 80.82 & 78.52 & 77.03 & 76.41 & 96.80 \\
    & GPTQ & 70.52 & 82.49 & 79.35 & 76.95 & \textbf{77.33} & \textbf{97.96} \\
    & AWQ & 70.57 & 82.71 & 79.30 & 77.03 & \textbf{77.40} & \textbf{98.06} \\
    \midrule
    \multirow{5}{*}{MXFP} 
    & RTN & 68.23 & 80.36 & 77.26 & 75.93 & 75.44 & 95.58 \\
    & RTN+Had & 66.24 & 77.56 & 77.34 & 74.11 & 73.81 & 93.51 \\
    & GPTQ & 68.79 & 81.43 & 78.40 & 76.88 & \textbf{76.37} & \textbf{96.76 }\\
    & AWQ & 68.16 & 78.70 & 78.56 & 75.30 & 75.18 & 95.25 \\
    \bottomrule
    \end{tabular}
    \caption{Performance of Llama-3.1-8B-Instruct under different weight-only quantization settings.}
    \label{tab:llama-3_1-8b-instruct-weight_only}
\end{table}

\section{Real Quantization Results}
\label{app:real_quant}

In this section we provide a complete set of evaluation results for Llama-3 (Llama-3.2-1B-Instruct, Llama-3.2-3B-Instruct, Llama-3.1-8B-Instruct, Llama-3.3-70B-Instruct) and Qwen-3 (Qwen-3-8B, Qwen-3-14B, Qwen-3-32B) model families. We turn off thinking mode for Qwen as it turned out that long reasoning chains-of-thought turned out to be detrimental for performance on GSM8k and MMLU-CoT.  The scores were produced using QuTLASS vLLM integration. 

\begin{table}[!htb]
    \centering
    \small
    \setlength{\tabcolsep}{4pt}
    \renewcommand{\arraystretch}{1.1}
    \begin{tabular}{l l c c c c | c c}
    \toprule
    Format & Quantization & MMLU & GSM8k & HellaSwag & WinoGrande & Avg. & Recovery\% \\ 
    \midrule
    - & FP16 & 46.20 & 46.32 & 59.78 & 61.56 & 53.47 & -- \\ 
    \midrule
    \multirow{2}{*}{INT8}
    & RTN & 45.90 & 44.20 & 59.80 & 61.30 & \textbf{52.80} & \textbf{99.55} \\
    & GPTQ & 45.40 & 44.90 & 59.60 & 60.10 & \textbf{52.50} & \textbf{98.99} \\
    \midrule
    \multirow{2}{*}{FP8}
    & RTN & 46.10 & 44.70 & 59.50 & 59.50 & \textbf{52.50} & \textbf{98.99} \\
    & GPTQ & 45.80 & 45.00 & 59.10 & 60.60 & \textbf{52.63} & \textbf{99.22} \\
    \midrule
    \midrule
    \multirow{8}{*}{NVFP} 
    & RTN & 36.08 & 31.39 & 54.77 & 57.22 & 44.87 & 83.91 \\
    & RTN+Had16 & 32.80 & 25.02 & 56.24 & 59.04 & 43.28 & 80.94 \\
    & RTN+Had128 & 38.28 & 29.95 & 54.27 & 58.41 & 45.23 & 84.59 \\
    & GPTQ & 37.79 & 29.80 & 55.48 & 60.22 & 45.82 & 85.71 \\
    & GPTQ+Had16 & 38.99 & 32.98 & 56.66 & 58.17 & \textbf{46.70} & \textbf{87.35} \\
    & GPTQ+Had128 & 35.47 & 31.16 & 57.02 & 59.19 & 45.71 & 85.50 \\
    & QAT & 27.85 & 38.51 & 57.52 & 60.30 & 46.05 & 86.12 \\
    & QAT+Had16 & 32.72 & 37.60 & 57.53 & 58.41 & \textbf{46.57} & \textbf{87.09} \\
    \midrule
    \multirow{8}{*}{MXFP} 
    & RTN & 30.46 & 11.83 & 48.28 & 54.22 & 36.20 & 67.70 \\
    & RTN+Had32 & 30.89 & 19.41 & 51.64 & 57.22 & 39.79 & 74.42 \\
    & RTN+Had128 & 34.48 & 25.55 & 53.98 & 58.01 & 43.01 & 80.44 \\
    & GPTQ & 26.84 & 13.50 & 49.29 & 56.75 & 36.60 & 68.45 \\
    & GPTQ+Had32 & 29.44 & 27.60 & 54.89 & 58.72 & 42.66 & 79.80 \\
    & GPTQ+Had128 & 35.68 & 28.13 & 54.60 & 58.72 & 44.28 & 82.83 \\
    & QAT & 15.60 & 20.32 & 53.34 & 56.51 & 36.44 & 68.16 \\
    & QAT+Had32 & 28.12 & 36.85 & 57.04 & 58.80 & \textbf{45.20} & \textbf{84.55 }\\
    \midrule
    \midrule
    \multirow{4}{*}{NVINT4}
    & RTN & 37.33 & 26.08 & 52.62 & 58.56 & 43.65 & 81.64 \\
    & RTN+Had16 & 33.41 & 32.52 & 57.12 & 59.19 & 45.56 & 85.21 \\
    & GPTQ & 37.15 & 27.60 & 55.94 & 59.35 & 45.01 & 84.19 \\
    & \methodname{} & 36.69 & 33.36 & 57.95 & 58.96 & \textbf{46.74} & \textbf{87.42} \\
    \midrule
    \multirow{4}{*}{MXINT4}
    & RTN & 21.85 & 4.55 & 45.07 & 55.49 & 31.74 & 59.37 \\
    & RTN+Had32 & 13.17 & 9.48 & 48.91 & 53.99 & 31.39 & 58.71 \\
    & GPTQ & 23.42 & 13.27 & 50.02 & 55.88 & 35.65 & 66.67 \\
    & \methodname{} & 21.81 & 23.12 & 54.96 & 55.41 & \textbf{38.83} & \textbf{72.62} \\
    \bottomrule
    \end{tabular}
    \caption{Performance of Llama-3.2-1B-Instruct for different weight \& activation quantization settings.}
    \label{tab:llama32-1b}
\end{table}

\begin{table}[!htb]
    \centering
    \small
    \setlength{\tabcolsep}{4pt}
    \renewcommand{\arraystretch}{1.1}
    \begin{tabular}{l l c c c c | c c}
    \toprule
    Format & Quantization & MMLU & GSM8k & HellaSwag & WinoGrande & Avg. & Recovery\% \\ 
    \midrule
    - & FP16 & 64.43 & 78.01 & 73.42 & 70.09 & 71.49 & -- \\ 
    \midrule
    \multirow{2}{*}{INT8}
    & RTN & 64.00 & 77.70 & 73.30 & 69.60 & \textbf{71.15} & \textbf{99.47} \\
    & GPTQ & 64.00 & 77.90 & 73.40 & 69.90 & \textbf{71.30} & \textbf{99.67} \\
    \midrule
    \multirow{2}{*}{FP8}
    & RTN & 63.40 & 77.70 & 73.00 & 69.70 & \textbf{70.95} & \textbf{99.19} \\
    & GPTQ & 64.10 & 77.80 & 73.00 & 69.90 & \textbf{71.20} & \textbf{99.54} \\
    \midrule
    \midrule
    \multirow{9}{*}{NVFP} 
    & RTN & 60.62 & 70.43 & 70.99 & 68.03 & 67.52 & 94.45 \\
    & RTN+Had16 & 59.91 & 64.82 & 69.77 & 65.59 & 65.02 & 90.96 \\
    & RTN+Had128 & 54.34 & 67.48 & 69.69 & 66.93 & 64.61 & 90.38 \\
    & GPTQ & 61.76 & 70.36 & 71.07 & 69.93 & 68.28 & 95.51 \\
    & GPTQ+Had16 & 60.26 & 68.76 & 71.05 & 67.80 & 66.97 & 93.68 \\
    & GPTQ+Had128 & 60.19 & 70.89 & 70.97 & 68.19 & 67.56 & 94.51 \\
    & MicroQAT+Had16 & 60.66 & 69.98 & 70.55 & 67.01 & 67.05 & 93.79 \\
    & QAT & 62.06 & 75.06 & 71.27 & 67.96 & \textbf{69.09} & \textbf{96.64} \\
    & QAT+Had16 & 62.03 & 72.93 & 70.95 & 66.46 & 68.09 & 95.25 \\
    \midrule
    \multirow{9}{*}{MXFP} 
    & RTN & 56.81 & 60.80 & 67.30 & 64.56 & 62.37 & 87.24 \\
    & RTN+Had32 & 55.58 & 57.77 & 68.56 & 64.33 & 61.56 & 86.11 \\
    & RTN+Had128 & 55.95 & 60.80 & 67.57 & 64.88 & 62.30 & 87.15 \\
    & GPTQ & 57.68 & 62.32 & 63.87 & 64.88 & 62.19 & 86.99 \\
    & GPTQ+Had32 & 59.79 & 68.92 & 69.50 & 66.85 & 66.27 & 92.69 \\
    & GPTQ+Had128 & 59.56 & 67.78 & 70.08 & 68.03 & 66.36 & 92.83 \\
    & MicroQAT+Had32 & 59.49 & 65.66 & 69.05 & 67.32 & 65.38 & 91.46 \\
    & QAT & 56.17 & 64.90 & 69.51 & 67.17 & 64.44 & 90.14 \\
    & QAT+Had32 & 59.83 & 72.48 & 70.27 & 66.54 & \textbf{67.28} & \textbf{94.11} \\
    \midrule
    \midrule
    \multirow{4}{*}{NVINT4}
    & RTN & 60.22 & 71.65 & 70.92 & 66.77 & 67.39 & 94.27 \\
    & RTN + HT & 56.75 & 71.95 & 69.24 & 66.61 & 66.14 & 92.52 \\
    & GPTQ & 60.5 & 71.42 & 27.01 & 51.07 & 52.50 & 73.44 \\
    & \methodname{} & 60.8 & 72.25 & 71.74 & 70.48 & \textbf{68.82} & \textbf{96.27} \\
    \midrule
    \multirow{4}{*}{MXINT4}
    & RTN & 46.03 & 47.54 & 64.21 & 61.56 & 54.84 & 76.71 \\
    & RTN + HT & 50.55 & 51.33 & 64.61 & 59.83 & 56.58 & 79.15 \\
    & GPTQ & 53.48 & 60.73 & 51.44 & 58.80 & 56.11 & 78.49 \\
    & \methodname{} & 57.73 & 67.48 & 68.92 & 66.69 & \textbf{65.21} & \textbf{91.21} \\
    \bottomrule
    \end{tabular}
    \caption{Performance of Llama-3.2-3B-Instruct for different weight \& activation quantization settings.}
    \label{tab:llama32-3b}
\end{table}

\begin{table}[!htb]
    \centering
    \small
    \setlength{\tabcolsep}{4pt} 
    \renewcommand{\arraystretch}{1.1} 
    \begin{tabular}{l l c c c c | c c}
    \toprule
    Format & Quantization & MMLU-CoT & GSM8k & HellaSwag & WinoGrande & Avg. & Recovery\% \\ 
    \midrule
    - & FP16 & 72.80 & 85.10 & 80.00 & 77.90 & 78.90 & -- \\ 
    \midrule
    \multirow{2}{*}{INT8}
    & RTN & 72.50 & 84.80 & 80.20 & 77.40 & \textbf{78.73} & \textbf{99.74} \\
    & GPTQ & 72.40 & 84.40 & 80.00 & 77.30 & \textbf{78.53} & \textbf{99.48} \\
    \midrule
    \multirow{2}{*}{FP8}
    & RTN & 72.40 & 84.70 & 79.80 & 77.70 & \textbf{78.65} & \textbf{99.64} \\
    & GPTQ & 71.80 & 84.50 & 79.90 & 78.10 & \textbf{78.58} & \textbf{99.55} \\
    \midrule
    \midrule
    \multirow{8}{*}{NVFP} 
    & RTN  & 68.70 & 78.70 & 78.40 & 73.40 & 74.80 & 94.80 \\
    & RTN+Had & 67.00 & 77.40 & 77.30 & 74.40 & 74.00 & 93.80 \\
    & RTN+Had128 & 66.60 & 77.00 & 77.50 & 75.50 & 74.10 & 93.90 \\
    & GPTQ & 68.60 & 79.60 & 78.70 & 75.50 & 75.60 & 95.70 \\
    & GPTQ+Had & 69.40 & 79.60 & 78.40 & 75.10 & 75.60 & 95.80 \\
    & GPTQ+Had128 & 68.90 & 79.50 & 78.30 & 73.60 & 75.10 & 95.10 \\
    & QAT & 68.20 & 79.80 & 78.90 & 74.40 & 75.30 & 95.40 \\
    & QAT+Had & 68.90 & 81.60 & 79.00 & 75.10 & \textbf{76.10} & \textbf{96.50} \\
    \midrule
    \multirow{8}{*}{MXFP} 
    & RTN & 62.20 & 69.50 & 73.80 & 72.60 & 69.50 & 88.10 \\
    & RTN+Had & 62.60 & 71.80 & 75.20 & 72.30 & 70.50 & 89.30 \\
    & RTN+Had128 & 64.50 & 72.70 & 76.00 & 73.30 & 71.60 & 90.70 \\
    & GPTQ & 63.74 & 70.20 & 75.52 & 7364 & 70.78 & 89.66 \\
    & GPTQ+Had & 67.20 & 77.50 & 77.00 & 73.10 & 73.70 & 93.30 \\
    & GPTQ+Had128 & 66.80 & 78.30 & 76.90 & 74.90 & 74.20 & 94.00 \\
    & QAT & 65.00 & 76.00 & 77.60 & 72.90 & 72.90 & 92.30 \\
    & QAT+Had & 67.60 & 80.30 & 78.30 & 74.90 & \textbf{75.30} & \textbf{95.40} \\
    \midrule
    \midrule
    \multirow{4}{*}{NVINT4} 
    & RTN & 68.56 & 78.17 & 78.64 & 75.14 & 75.13 & 95.18 \\
    & RTN + HT & 68.59 & 81.73 & 78.38 & 74.27 & 75.74 & 95.96 \\
    & GPTQ & 68.69 & 81.58 & 77.59 & 73.40 & 75.32 & 95.42  \\
    & \methodname{} & 69.71 & 82.26 & 79.14 & 75.53 & \textbf{76.66} & \textbf{97.12} \\ 
    \midrule
    \multirow{4}{*}{MXINT4} 
    & RTN & 55.06 & 56.79 & 72.06 & 68.27 & 63.05 & 79.87 \\
    & RTN + HT & 58.44 & 61.64 & 73.94 & 71.19 & 66.30 & 84.00 \\
    & GPTQ & 61.22 & 67.70 & 75.04 & 71.67 & 68.91 & 87.30  \\
    & \methodname{} & 65.48 & 74.83 & 76.63 & 73.09 & \textbf{72.51} & \textbf{91.86} \\ 
    \bottomrule
    \end{tabular}
    \caption{Performance of Llama-3.1-8B-Instruct for different weight \& activation quantization settings.}
    \label{tab:llama31_8b}
\end{table}

\begin{table}[!htb]
    \centering
    \small
    \setlength{\tabcolsep}{4pt}
    \renewcommand{\arraystretch}{1.1}
    \begin{tabular}{l l c c c c | c c}
    \toprule
    Format & Quantization & MMLU & GSM8k & HellaSwag & WinoGrande & Avg. & Recovery\% \\ 
    \midrule
    - & FP16 & 86.55 & 95.07 & 86.22 & 84.93 & 88.19 & -- \\ 
    \midrule
    \midrule
    \multirow{6}{*}{NVFP} 
    & RTN & 85.50 & 93.48 & 85.63 & 83.27 & 86.97 & 98.61 \\
    & RTN+Had16 & 85.02 & 93.63 & 84.97 & 83.82 & 86.86 & 98.49 \\
    & RTN+Had128 & 85.24 & 91.81 & 84.91 & 83.35 & 86.33 & 97.89 \\
    & GPTQ & 85.54 & 94.09 & 85.49 & 84.37 & 87.37 & 99.07 \\
    & GPTQ+Had16 & 85.58 & 93.40 & 85.45 & 82.40 & 86.71 & 98.32 \\
    & GPTQ+Had128 & 85.59 & 94.16 & 85.56 & 84.77 & \textbf{87.52} & \textbf{99.24} \\
    \midrule
    \multirow{6}{*}{MXFP} 
    & RTN & 83.42 & 92.65 & 83.93 & 81.45 & 85.36 & 96.79 \\
    & RTN+Had32 & 83.86 & 93.56 & 84.13 & 83.58 & 86.28 & 97.83 \\
    & RTN+Had128 & 84.37 & 94.47 & 84.22 & 82.40 & 86.37 & 97.93 \\
    & GPTQ & 83.77 & 94.47 & 84.41 & 82.64 & 86.32 & 97.88 \\
    & GPTQ+Had32 & 84.82 & 94.54 & 84.66 & 83.11 & 86.78 & 98.40 \\
    & GPTQ+Had128 & 84.90 & 93.90 & 84.80 & 83.80 & \textbf{86.86} & \textbf{98.48} \\
    \bottomrule
    \end{tabular}
    \caption{Performance of Llama-3.3-70B-Instruct for different weight \& activation quantization settings.}
    \label{tab:llama33-70b}
\end{table}

\begin{table}[!htb]
    \centering
    \small
    \setlength{\tabcolsep}{4pt}
    \renewcommand{\arraystretch}{1.1}
    \begin{tabular}{l l c c c c | c c}
    \toprule
    Format & Quantization & MMLU & GSM8k & HellaSwag & WinoGrande & Avg. & Recovery\% \\ 
    \midrule
    - & FP16 & 72.98 & 90.90 & 75.52 & 70.56 & 77.49 & -- \\ 
    \midrule
    \midrule
    \multirow{8}{*}{NVFP} 
    & RTN & 70.78 & 90.30 & 74.63 & 70.72 & \textbf{76.61} & \textbf{98.86} \\
    & RTN+Had16 & 70.19 & 86.35 & 73.02 & 68.11 & 74.42 & 96.04 \\
    & RTN+Had128 & 69.09 & 86.66 & 73.47 & 67.96 & 74.30 & 95.88 \\
    & GPTQ & 70.90 & 88.17 & 75.01 & 70.09 & 76.04 & 98.13 \\
    & GPTQ+Had16 & 71.06 & 88.32 & 74.58 & 68.03 & 75.50 & 97.43 \\
    & GPTQ+Had128 & 70.45 & 87.41 & 74.25 & 68.90 & 75.25 & 97.11 \\
    & QAT & 70.94 & 89.08 & 74.67 & 68.51 & 75.80 & 97.82 \\
    & QAT+Had16 & 71.34 & 89.23 & 75.24 & 70.40 & \textbf{76.55} & \textbf{98.79} \\
    \midrule
    \multirow{8}{*}{MXFP} 
    & RTN & 67.69 & 84.23 & 71.24 & 67.40 & 72.64 & 93.74 \\
    & RTN+Had32 & 67.57 & 83.78 & 71.32 & 67.32 & 72.50 & 93.56 \\
    & RTN+Had128 & 67.27 & 81.58 & 71.41 & 66.38 & 71.66 & 92.48 \\
    & GPTQ & 68.01 & 84.23 & 71.65 & 67.80 & 72.92 & 94.11 \\
    & GPTQ+Had32 & 69.13 & 84.84 & 73.17 & 68.03 & 73.79 & 95.23 \\
    & GPTQ+Had128 & 69.53 & 86.43 & 73.55 & 65.75 & 73.82 & 95.26 \\
    & QAT & 69.45 & 87.34 & 74.03 & 69.85 & 75.17 & 97.00 \\
    & QAT+Had32 & 70.35 & 89.61 & 74.61 & 70.56 & \textbf{76.28} & \textbf{98.44} \\
    \bottomrule
    \end{tabular}
    \caption{Performance of Qwen-8B for different weight \& activation quantization settings.}
    \label{tab:qwen8b}
\end{table}

\begin{table}[!htb]
    \centering
    \small
    \setlength{\tabcolsep}{4pt}
    \renewcommand{\arraystretch}{1.1}
    \begin{tabular}{l l c c c c | c c}
    \toprule
    Format & Quantization & MMLU & GSM8k & HellaSwag & WinoGrande & Avg. & Recovery\% \\ 
    \midrule
    - & FP16 & 77.18 & 91.96 & 79.84 & 74.27 & 80.81 & -- \\ 
    \midrule
    \midrule
    \multirow{6}{*}{NVFP} 
    & RTN & 75.73 & 91.28 & 78.36 & 73.16 & 79.63 & 98.54 \\
    & RTN+Had16 & 74.98 & 92.04 & 77.76 & 72.38 & 79.29 & 98.12 \\
    & RTN+Had128 & 74.46 & 91.13 & 77.60 & 71.98 & 78.79 & 97.50 \\
    & GPTQ & 74.88 & 91.28 & 78.40 & 74.51 & 79.77 & 98.71 \\
    & GPTQ+Had16 & 75.49 & 91.43 & 78.38 & 74.51 & \textbf{79.95} & \textbf{98.94} \\
    & GPTQ+Had128 & 75.10 & 90.52 & 78.30 & 72.77 & 79.17 & 97.97 \\
    \midrule
    \multirow{6}{*}{MXFP} 
    & RTN & 72.92 & 90.22 & 76.68 & 71.51 & 77.83 & 96.31 \\
    & RTN+Had32 & 73.19 & 89.54 & 75.95 & 71.67 & 77.59 & 96.01 \\
    & RTN+Had128 & 73.17 & 85.60 & 76.80 & 72.14 & 76.93 & 95.19 \\
    & GPTQ & 72.57 & 89.54 & 76.50 & 72.45 & 77.77 & 96.23 \\
    & GPTQ+Had32 & 74.36 & 89.92 & 77.64 & 72.53 & \textbf{78.61} & \textbf{97.28} \\
    & GPTQ+Had128 & 74.11 & 89.92 & 77.77 & 71.11 & 78.23 & 96.80 \\
    \bottomrule
    \end{tabular}
    \caption{Performance of Qwen-14B for different weight \& activation quantization settings.}
    \label{tab:qwen14b}
\end{table}

\begin{table}[!htb]
    \centering
    \small
    \setlength{\tabcolsep}{4pt}
    \renewcommand{\arraystretch}{1.1}
    \begin{tabular}{l l c c c c | c c}
    \toprule
    Format & Quantization & MMLU & GSM8k & HellaSwag & WinoGrande & Avg. & Recovery\% \\ 
    \midrule
    - & FP16 & 80.81 & 92.04 & 83.97 & 76.56 & 83.35 & -- \\ 
    \midrule
    \midrule
    \multirow{6}{*}{NVFP} 
    & RTN & 79.85 & 94.24 & 83.27 & 75.22 & \textbf{83.15} & \textbf{99.76} \\
    & RTN+Had16 & 78.90 & 89.23 & 82.60 & 76.48 & 81.80 & 98.15 \\
    & RTN+Had128 & 78.49 & 89.69 & 82.47 & 75.37 & 81.51 & 97.79 \\
    & GPTQ & 79.54 & 92.87 & 83.24 & 75.93 & \textbf{82.90} & \textbf{99.46} \\
    & GPTQ+Had16 & 78.60 & 90.90 & 82.93 & 75.14 & 81.89 & 98.26 \\
    & GPTQ+Had128 & 79.11 & 90.52 & 83.15 & 76.09 & 82.22 & 98.65 \\
    \midrule
    \multirow{6}{*}{MXFP} 
    & RTN & 77.07 & 72.33 & 81.52 & 75.22 & 76.54 & 91.83 \\
    & RTN+Had32 & 78.22 & 93.03 & 81.76 & 75.93 & \textbf{82.24} & \textbf{98.67} \\
    & RTN+Had128 & 78.36 & 88.10 & 81.66 & 75.30 & 80.86 & 97.01 \\
    & GPTQ & 77.01 & 88.55 & 81.79 & 74.90 & 80.56 & 96.66 \\
    & GPTQ+Had32 & 78.46 & 82.41 & 82.72 & 75.06 & 79.66 & 95.58 \\
    & GPTQ+Had128 & 78.90 & 90.90 & 82.29 & 75.22 & \textbf{81.83} & \textbf{98.18} \\
    \bottomrule
    \end{tabular}
    \caption{Performance of Qwen-32B for different weight \& activation quantization settings.}
    \label{tab:qwen32b}
\end{table}

Table~\ref{tab:real_recoveries_wrap} present more details on accuracy recovery for micro-scaling formats, including \emph{NVINT4} and \emph{MXINT4} and QAT.

\begin{table}[t]
\centering
\caption{Per-model recoveries with real (NVFP4 and MXFP4) and hypothetical (NVINT4 and MXINT4) quantization.}
\label{tab:real_recoveries_wrap}
\begin{tabular}{l l c cccc ccc}
      \toprule
      Format & Method & HT & \multicolumn{4}{c}{Llama3} & \multicolumn{3}{c}{Qwen3} \\
       &  &  & 1B & 3B & 8B & 70B & 8B & 14B & 32B \\
      \cmidrule(lr){4-7}
      \cmidrule(lr){8-10}
      \multirow{6}{*}{NVFP4} & RTN & -- & 83.9 & 94.4 & 94.8 & 98.6 & 98.9 & 98.5 & 99.8 \\
       & RTN & 16 & 80.9 & 91.0 & 93.8 & 98.5 & 96.0 & 98.1 & 98.1 \\
       & GPTQ & -- & 85.7 & 95.5 & 95.7 & 99.1 & 98.1 & 98.7 & 99.5 \\
       & MR-GPTQ & 16 & 87.3 & 93.7 & 95.8 & 98.3 & 97.4 & 98.9 & 98.3 \\
       & QAT & -- & 86.1 & 96.6 & 95.4 & -- & 97.8 & -- & -- \\
       & QAT & 16 & 87.1 & 95.3 & 96.5 & -- & 98.8 & -- & -- \\
      \midrule
      \multirow{6}{*}{MXFP4} & RTN & -- & 67.7 & 87.2 & 88.1 & 96.8 & 93.7 & 96.3 & 91.8 \\
       & RTN & 32 & 74.4 & 86.1 & 89.3 & 97.8 & 93.6 & 96.0 & 98.7 \\
       & GPTQ & -- & 68.4 & 87.0 & 89.7 & 97.9 & 94.1 & 96.2 & 96.7 \\
       & MR-GPTQ & 32 & 79.8 & 92.7 & 93.3 & 98.4 & 95.2 & 97.3 & 95.6 \\
       & QAT & -- & 68.2 & 90.1 & 92.3 & -- & 97.0 & -- & -- \\
       & QAT & 32 & 84.5 & 94.1 & 95.4 & -- & 98.4 & -- & -- \\
      \midrule
      \midrule
      \multirow{4}{*}{NVINT4} & RTN & -- & 81.6 & 94.3 & 95.2 & -- & 96.3 & 98.1 & -- \\
       & RTN & 16 & 85.2 & 92.5 & 96.0 & -- & 98.1 & 99.2 & -- \\
       & GPTQ & -- & 84.2 & 92.5 & 95.4 & -- & -- & -- & -- \\
       & MR-GPTQ & 16 & 87.4 & 96.3 & 97.1 & -- & -- & -- & -- \\
      \midrule
      \multirow{4}{*}{MXINT4} & RTN & -- & 59.4 & 76.7 & 79.9 & -- & 83.9 & 92.4 & -- \\
       & RTN & 32 & 58.7 & 79.1 & 84.0 & -- & 89.7 & 94.9 & -- \\
       & GPTQ & -- & 66.7 & 78.5 & 87.3 & -- & -- & -- & -- \\
       & MR-GPTQ & 32 & 72.6 & 91.2 & 91.9 & -- & -- & -- & -- \\
      \bottomrule
      \end{tabular}
\end{table}

\FloatBarrier
\section{Scale quantization analysis}

As discussed in the main text, microscaling formats adopt scale quantization to reduce memory storage overhead and accelerate dequantization operations. However, scale quantization may introduce additional error due to rounding of scales onto a coarser grid. Below we provide an analysis and explore alternative choices for scale quantization. 

MXFP format adopts E8M0 grid with exponentially spaced levels. It allows to represent values with very small and large magnitude, yet the distance between adjacent levels can be pretty large resulting in large approximation errors. E4M3 grid used in NVFP, on the other hand, has much narrower dynamic range $[-448, 448]$ with levels spread more uniformly. We note, that the sign bit is in fact never utilized, given that the scale is a non-negative value by definition.

Below, we explore several choices for 8-bit scale quantization with a fixed group size of 16. Specifically, we measure weight and activation $\MSE^{\mathrm{rel}}$ for a range of EeMm formats with e + m = 7, as well as for E8M0 and INT8. For E8M0 scale quantization, we multiply the scale by 4/3 following \citep{tseng2025trainingllmsmxfp4}, which yields an unbiased estimate of the original scale and reduces quantization error. Results for weight and activation quantization are shown in Figure~\ref{fig:weight_quantization_error_for_scale_quantization_format} and Figure~\ref{fig:activation_quantization_error_for_scale_quantization_format}, respectively.

\begin{figure}[!htb]
    \centering
    \includegraphics[width=0.95\linewidth]{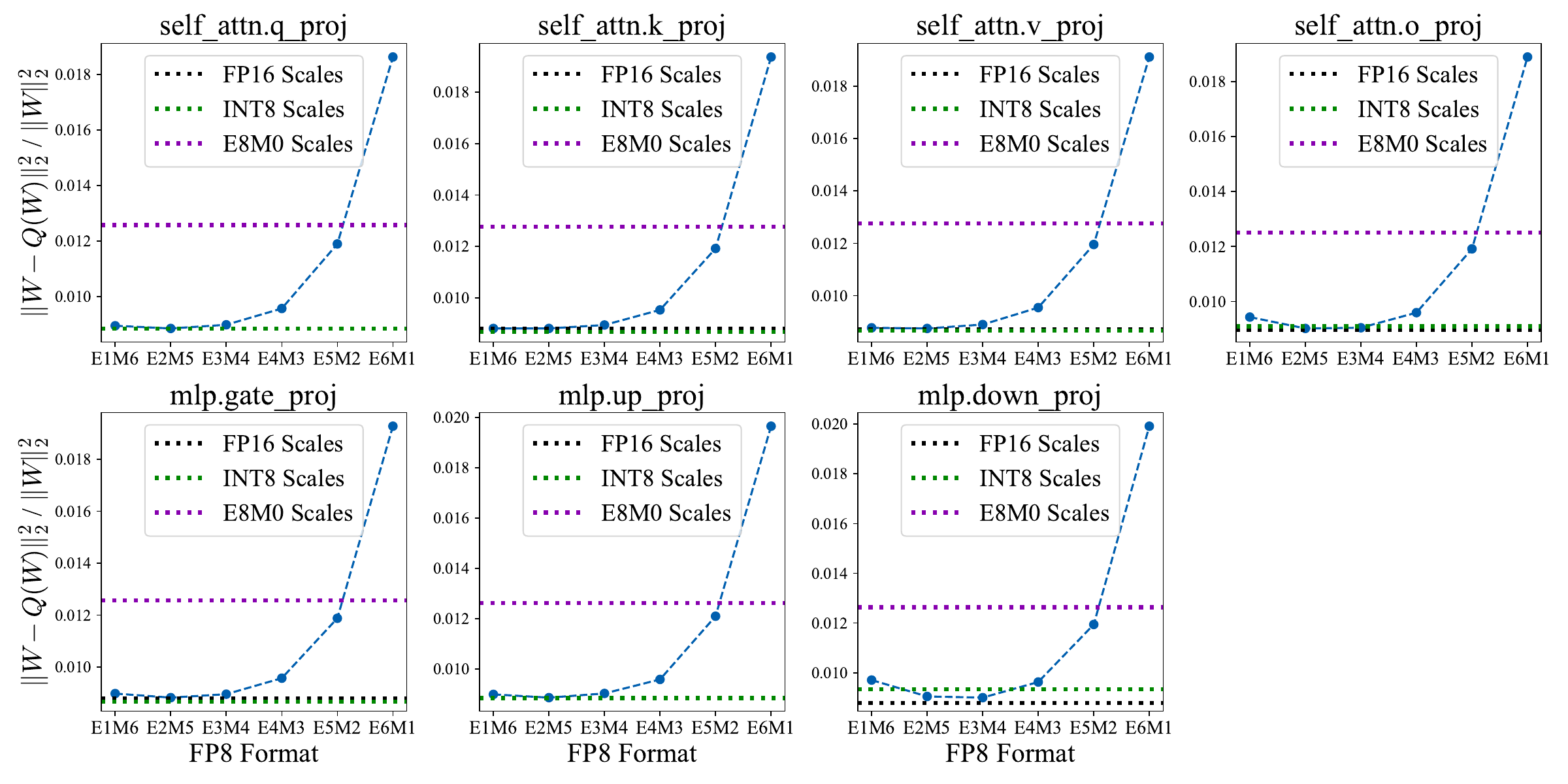}
    \caption{$\MSE^{\mathrm{rel}}$ for the weights of 15th block in the Llama-3.1-8B-Instruct model.}
\label{fig:weight_quantization_error_for_scale_quantization_format}
\end{figure}

\begin{figure}[!htb]
    \centering
    \includegraphics[width=0.95\linewidth]{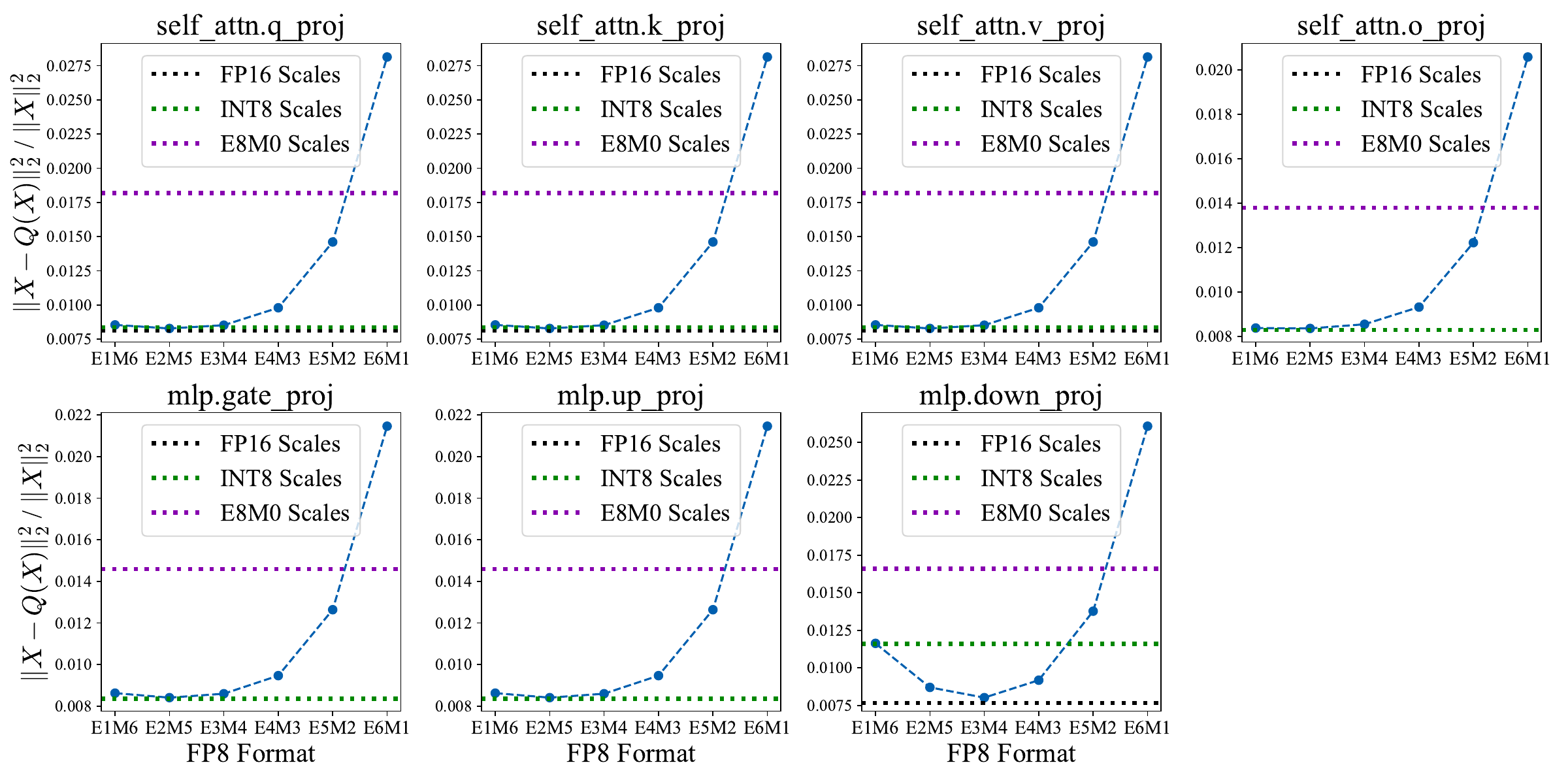}
    \caption{$\MSE^{\mathrm{rel}}$ for the activations of 15th block in the Llama-3.1-8B-Instruct model.}
\label{fig:activation_quantization_error_for_scale_quantization_format}
\end{figure}

One can observe that the E4M3 and E8M0 scales are not optimal for weight scale quantization. E4M3 and E8M0 increase $\MSE^{\mathrm{rel}}$ by 10\%, 40\% on average, respectively. 
At the same time, FP8 options with larger mantissa (E1M6-E3M4) as well as INT8 perform close to FP4 without scale quantization. The pattern for activation pattern is similar except for the case of \texttt{down\_proj} in feedforward layer, which is known to have a more heavy-tailed distribution with pronounced outliers. We note that the observed behavior generalizes to other models considered in our study.

\section{INT4-Based Microscaling Formats}
\label{sec:int4}

To bridge this analysis gap and inform future hardware designs, we now present a new analysis of  \emph{hypothetical} microscaling INT4 formats.

Since INT4 is not a part of the ``OCP Microscaling Formats (MX) Specification''~\citep{mxfp}, we define it ourselves as follows:
\begin{enumerate}
    \item We define INT4 base data type as a uniform symmetric grid of 15 (to match FP4)  elements:\\\texttt{[-7, -6, -5, -4, -3, -2, -1, 0, +1, +2, +3, +4, +5, +6, +7]}.
    \item We define NVINT4 as a microscaling format with E4M3 shared scales for groups of 16 elements (same as NVFP4) over the INT4 base element data type.
    \item We define MXINT4 as a microscaling format with E8M0 shared scales for groups of 32 elements (same as MXFP4) over the INT4 base element data type.
\end{enumerate}

Applying the error analysis from Section~\ref{sec:error_analysis} to these formats, as demonstrated in Figure~\ref{fig:rqe_mape_llama_laplace_int}, reveals that NVINT4 performs close to NVFP4 without the normalizing transforms. Yet, the proposed group Hadamard Transform has positive effect on it (as opposed to negative for NVFP4), making micro-rotated NVINT4 the most accurate of all the analyzed formats. MXINT4, however, performs poorly, and the normalizing transforms yield limited improvement. These findings extrapolate seamlessly to evalutations in Tables~\ref{tab:unified-quantization-comparison-simulation},\ref{tab:llama32-1b},\ref{tab:llama32-3b} and~\ref{tab:llama31_8b}, confirming the usefullness of the Hadamard Transform and the superiority of the micro-rotated NVINT4 format.

\begin{figure}
    \centering
    \includegraphics[width=1.0\textwidth]{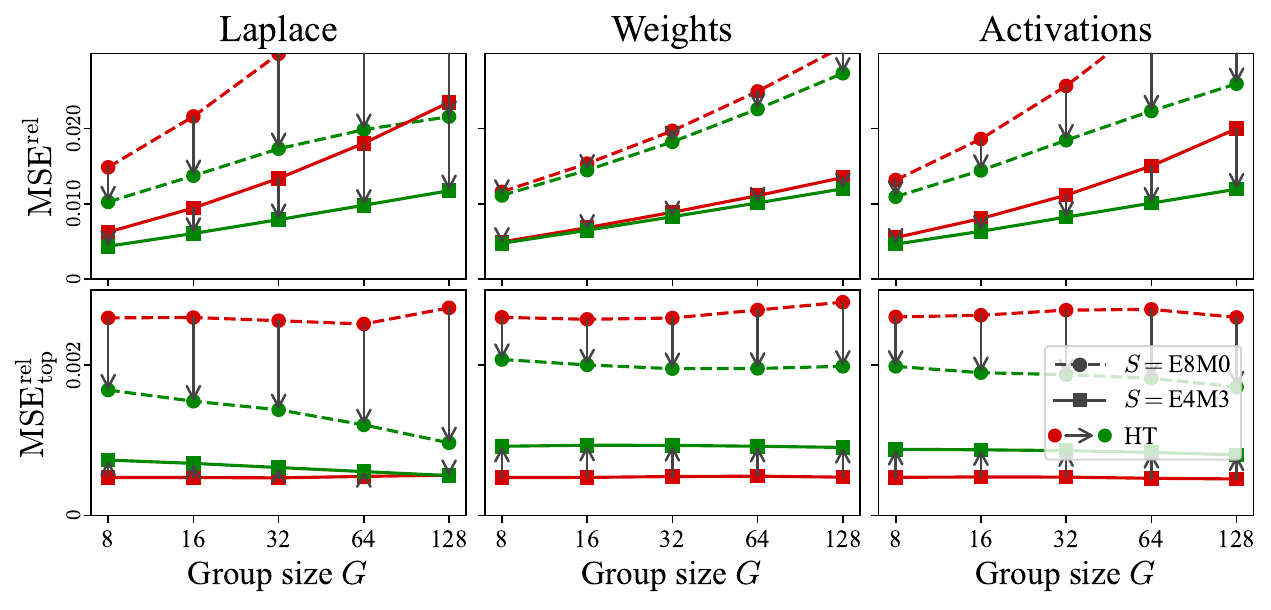}
    \caption{The effect of Hadamard Transform (HT) on MXINT4 (E8M0) and NVINT4 (E4M3) quantization on Laplace distribution samples and Llama-3.1-8B-Instruct weights and activations for various group sizes.}
    \label{fig:rqe_mape_llama_laplace_int}
\end{figure}

\section{Outliers Analysis}

\textbf{Proof of Lemma 1.}
Let $U=\tfrac{1}{\sqrt{G}}H$ be the normalized Hadamard matrix. $U$ is orthogonal ($U^\top U = I_G$). The error vectors are related by $\varepsilon_x = \widehat{x}-x = U^\top\widehat{y} - U^\top y = U^\top(\widehat{y}-y) = U^\top\varepsilon_y$. Since $U$ is orthogonal, it preserves the Euclidean norm: $\|\varepsilon_x\|_2^2 = \|U^\top\varepsilon_y\|_2^2 = \|\varepsilon_y\|_2^2$. The per-element Mean Squared Error (MSE) is defined as:
\[
\MSE(G) = \frac{1}{G}\,\E[\|\varepsilon_x\|_2^2] = \frac{1}{G}\,\E[\|\varepsilon_y\|_2^2].
\]
This establishes the second equality.

To prove the first, we rely on the standard assumption in quantization analysis that the quantization error $\varepsilon_y$ is statistically independent of the signal $y$. Since $x$ and $y$ are related by the invertible transformation $x=U^\top y$, $\varepsilon_y$ is also independent of $x$. Consequently, the reconstruction error $\varepsilon_x = U^\top\varepsilon_y$ is also going to be independent of $x$.

The index $I_\star=\arg\max_i|x_i|$ is a function of $x$. Therefore, the error vector $\varepsilon_x$ (and its components) is independent of the random index $I_\star$.
Further, since the coordinates of $x$ are i.i.d., we can apply symmetry to obtain that the probability that any coordinate $i$ has the largest magnitude is uniform: $P(I_\star=i) = 1/G$.

We calculate the Top-Element MSE using the Law of Total Expectation:
\begin{align*}
\MSE_{\mathrm{top}}(G) &= \E[(\varepsilon_x)_{I_\star}^2] \\
&= \sum_{i=1}^G \E[(\varepsilon_x)_{I_\star}^2 \mid I_\star=i] P(I_\star=i) \\
&= \sum_{i=1}^G \E[(\varepsilon_x)_i^2 \mid I_\star=i] \cdot \frac{1}{G}.
\end{align*}
Because $(\varepsilon_x)_i^2$ is independent of the event $\{I_\star=i\}$, the conditional expectation simplifies to $\E[(\varepsilon_x)_i^2 \mid I_\star=i] = \E[(\varepsilon_x)_i^2]$. Substituting yields:
\begin{align*}
\MSE_{\mathrm{top}}(G) &= \frac{1}{G} \sum_{i=1}^G \E[(\varepsilon_x)_i^2] \\
&= \frac{1}{G} \E\left[\sum_{i=1}^G (\varepsilon_x)_i^2\right] \quad (\text{by linearity of expectation}) \\
&= \frac{1}{G} \E[\|\varepsilon_x\|_2^2] = \MSE(G).
\end{align*}
This completes the proof.

\begin{lemma}[Outliers MAPE]\label{lem:outliers_mape}
Let distribution $\mathcal{X}$ be a mix of two distributions: $\mathcal{X}_{base}$ and $\mathcal{X}_{outliers}$ with portions $1-p$ and $p$ such that:
\begin{enumerate}
    \item $\min(|\mathcal{X}_{outliers}|) > \max(|\mathcal{X}_{base}|)$,
    \item $\MSE_{\mathrm{top}}^{\mathrm{rel}}(X\sim\mathcal{X}|X_{I_\star}\sim\mathcal{X}_{outliers}) = \MSE_{\mathrm{top}}^{\mathrm{rel}}(X\sim\mathcal{X}|X_{I_\star}\sim\mathcal{X}_{base})$,
    \item $p\cdot G \ll \MSE_{\mathrm{top}}^{\mathrm{rel}}(\mathcal{X})$.
\end{enumerate}
Then the expected outlier relative quadratic error equals $\MSE_{\mathrm{top}}^{\mathrm{rel}}(\mathcal{X})$ up to $O(pG)$:
\[
    \E_{X\sim\mathcal{X}}\!\left[\frac{\sum\limits_{i=1}^{G}\lambda_{X_i\sim\mathcal{X}_{outliers}}\cdot \frac{(X_i-\widehat{X}_i)^2}{X_i^2}}{\sum\limits_{i=1}^{G}\lambda_{X_i\sim\mathcal{X}_{outliers}}}\right]
    \approx \MSE_{\mathrm{top}}^{\mathrm{rel}}(X\sim\mathcal{X}).
\]
\end{lemma}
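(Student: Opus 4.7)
I would decompose the expectation according to the number of outliers in each block so that the single-outlier case matches $\MSE^{\mathrm{rel}}_{\mathrm{top}}(\mathcal{X})$ directly via condition 2, while the multi-outlier case is absorbed into the $O(pG)$ error term. Let $N := \sum_{i=1}^{G}\lambda_{X_i\sim\mathcal{X}_{outliers}} \sim \mathrm{Bin}(G,p)$, and denote by $R$ the quotient inside the expectation. Since $R$ is undefined on $\{N=0\}$, I interpret the claim as conditional on $\{N\ge 1\}$. A direct binomial expansion gives $P(N=1\mid N\ge 1)=1-O(pG)$ and $P(N\ge 2\mid N\ge 1)=O(pG)$.

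Next, by condition 1 every outlier magnitude strictly dominates every base magnitude, so the event $\{N\ge 1\}$ coincides with $\{X_{I_\star}\sim\mathcal{X}_{outliers}\}$. Combining this with condition 2 and the law of total expectation yields
\[
\E\!\left[\frac{(X_{I_\star}-\widehat{X}_{I_\star})^2}{X_{I_\star}^2} \,\Big|\, N\ge 1\right] = \MSE^{\mathrm{rel}}_{\mathrm{top}}(\mathcal{X}),
\]
since the unconditional top-element relative error splits into two conditional averages (over $\{X_{I_\star}\sim\mathcal{X}_{out}\}$ and its complement) that are equal by hypothesis. On the $\{N=1\}$ slice the unique outlier coordinate is exactly $I_\star$ and $\sum_i\lambda_{X_i\sim\mathcal{X}_{out}}=1$, so $R$ collapses to $(X_{I_\star}-\widehat{X}_{I_\star})^2/X_{I_\star}^2$. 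Rearranging the law of total expectation then shows $\E[R\mid N=1]$ differs from $\MSE^{\mathrm{rel}}_{\mathrm{top}}(\mathcal{X})$ by at most an additive $O(pG)$, coming from the reweighting induced by the $\{N\ge 2\}$ slice.

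Finally I would dispose of the $\{N\ge 2\}$ case: here outlier coordinates other than $I_\star$ are not directly controlled by condition 2, but this event has conditional probability $O(pG)$, so assuming $R$ is uniformly bounded on multi-outlier blocks (a mild regularity condition on the quantizer restricted to $\mathcal{X}_{outliers}$, under which each $X_i^2$ stays bounded away from zero on the outlier support), its contribution is also $O(pG)$. Summing the two slices via the law of total expectation yields $\E[R\mid N\ge 1]=\MSE^{\mathrm{rel}}_{\mathrm{top}}(\mathcal{X})+O(pG)$, and condition 3 certifies that this error is negligible compared to the target quantity.

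The main obstacle is controlling $\E[R\mid N\ge 2]$ rigorously: condition 2 pins down only the relative error of the top outlier, whereas $R$ averages relative errors over \emph{all} outlier coordinates in the block. One therefore needs either an additional uniform boundedness assumption on the per-coordinate relative error under $\mathcal{X}_{outliers}$, or a strengthening of condition 2 to cover non-top outliers. Once this bound is secured, the remainder of the argument is routine bookkeeping with binomial tail probabilities.
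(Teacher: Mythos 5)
Your proof is correct and follows the same conditioning strategy as the paper's: both observe that, by condition~1, the presence of any outlier in a block forces $I_\star$ to be an outlier, so the expectation can be taken conditionally on $\{X_{I_\star}\sim\mathcal{X}_{\mathrm{outliers}}\}$ (your $\{N\ge 1\}$), and both then use condition~2 to identify the resulting conditional top-element error with $\MSE^{\mathrm{rel}}_{\mathrm{top}}(\mathcal{X})$. Where you differ is in how the remainder is controlled. The paper works algebraically with the quotient itself: conditional on the event, the $I_\star$ term contributes $1$ to the denominator and $(X_{I_\star}-\widehat{X}_{I_\star})^2/X_{I_\star}^2$ to the numerator, while the remaining $\lambda_{X_i\sim\mathcal{X}_{\mathrm{outliers}}}$ terms over $i\neq I_\star$ each appear with probability $\approx p$ and are summed over $G-1$ indices, giving an $O(pG)$ perturbation of the ratio. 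You instead split $\{N\ge 1\}$ into $\{N=1\}$ and $\{N\ge 2\}$ by the law of total expectation, noting $R$ collapses exactly to the top relative error on $\{N=1\}$ and that $\{N\ge 2\}$ has conditional probability $O(pG)$. The two routes deliver the same bound and buy essentially the same thing; your version is a somewhat more explicit probabilistic bookkeeping, while the paper's is a one-line algebraic perturbation of the quotient. Your flag at the end is accurate: both proofs implicitly require that the per-coordinate relative errors $(X_i-\widehat{X}_i)^2/X_i^2$ stay uniformly bounded on the outlier support (so that the $O(pG)$-probability events contribute only $O(pG)$ to the expectation), and the paper leaves this unstated; you are right to call it out as a mild regularity condition rather than something derivable from conditions~1--3 alone.
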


\begin{proof}
We expand the expectation conditioned on $X_{I_\star}\sim\mathcal{X}_{outliers}$:
\begin{align*}
    &\E_{X\sim\mathcal{X}}\left[\frac{\sum_{i=1}^{G}\lambda_{X_i\sim\mathcal{X}_{outliers}}\cdot \frac{(X_i-\widehat{X}_i)^2}{X_i^2}}{\sum_{i=1}^{G}\lambda_{X_i\sim\mathcal{X}_{outliers}}}\right]\\
    &= \E_{X\sim\mathcal{X}\,|\,X_{I_\star}\sim\mathcal{X}_{outliers}}\!\left[\frac{\frac{(X_{I_\star}-\widehat{X}_{I_\star})^2}{X_{I_\star}^2} + \sum_{i\ne I_\star}\lambda_{X_i\sim\mathcal{X}_{outliers}}\cdot \frac{(X_i-\widehat{X}_i)^2}{X_i^2}}{1 + \sum_{i\ne I_\star}\lambda_{X_i\sim\mathcal{X}_{outliers}}}\right]\\
    &= \E_{X\sim\mathcal{X}\,|\,X_{I_\star}\sim\mathcal{X}_{outliers}}\!\left[\frac{(X_{I_\star}-\widehat{X}_{I_\star})^2}{X_{I_\star}^2}\right] + O(pG).
\end{align*}
By Assumption 2 this conditional expectation equals $\MSE_{\mathrm{top}}^{\mathrm{rel}}(\mathcal{X})$, up to $O(pG)$ from Assumption 3. Hence the claim follows.
\end{proof}

\paragraph{Discussion.} 
Assumption 1 is satisfied for outliers chosen by absolute value thresholds.  
Assumption 2 holds for floating-point quantization due to constant relative accuracy (no overflow/underflow), verified in Section~\ref{sec:numerical_validation}.  
Assumption 3 holds in practice for LLMs since outliers are typically about $0.1\%$ of elements~\citep{dettmers20168bit}.

\begin{lemma}[Consistency of $\MSE_{\mathrm{top}}^{\mathrm{rel}}$ for smooth distributions]\label{lem:mape_consistency}
Let $\mathcal{X}$ be a distribution of values to quantize with a power-of-two translation-invariant quantization function 
\[
Q:\forall x\in\mathbb{R}_+,\forall k\in\mathbb{Z}: Q(x\cdot2^k)=2^k\cdot Q(x).
\]
Assume:
\begin{enumerate}
    \item $\supp\mathcal{X}\subset[2^a, 2^b]$ for integers $a<b$,
    \item $\forall x\in\supp\mathcal{X},\forall y\in[x/\sqrt{2}, x\cdot\sqrt{2}]: |f_{\mathcal{X}}(x) - f_{\mathcal{X}}(y)|\le\alpha$,
    \item $\frac{(x -Q(x))^2}{x^2} \le \MSE_{\mathrm{max}}^{\mathrm{rel}}$.
\end{enumerate}
Then
\[
\E_{x\sim\mathcal{X}}\!\left[\frac{(x-Q(x))^2}{x^2}\right]
= \int_1^2 \frac{(x-Q(x))^2}{x^2}\,dx \;+\; O\!\left((2^b-2^a)\,\MSE_{\mathrm{max}}^{\mathrm{rel}}\cdot \alpha\right).
\]
\end{lemma}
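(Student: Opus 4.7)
The plan is to exploit the power-of-two scale invariance of $Q$ to reduce the computation to a dyadic block decomposition, and then use Assumption 2 to approximate $f$ by a constant on each block. Write $g(x) = (x-Q(x))^2/x^2$. The hypothesis $Q(2x) = 2Q(x)$ gives immediately $g(2x)=g(x)$, so $g$ is periodic in $\log_2 x$ with period $1$.

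First, I would decompose the support into the $b-a$ dyadic blocks $[2^k,2^{k+1}]$ for $k=a,\ldots,b-1$, writing
\[
\E[g(X)] \;=\; \sum_{k=a}^{b-1} \int_{2^k}^{2^{k+1}} f(x)\, g(x)\, dx.
\]
On each block, apply Assumption 2 at the geometric midpoint $x_k = 2^{k+1/2}$: since every point of $[2^k,2^{k+1}]$ lies in $[x_k/\sqrt{2},\, x_k\sqrt{2}]$, we have $|f(x)-f(x_k)|\le \alpha$ there. Combining with the pointwise bound $g(x)\le \MSE_{\mathrm{max}}^{\mathrm{rel}}$ from Assumption 3 gives
\[
\int_{2^k}^{2^{k+1}} f(x)\,g(x)\,dx \;=\; f(x_k)\int_{2^k}^{2^{k+1}} g(x)\,dx \;+\; O\!\bigl(\alpha \cdot \MSE_{\mathrm{max}}^{\mathrm{rel}} \cdot 2^k\bigr).
\]

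Second, via the change of variables $x = 2^k y$ and the scale invariance of $g$, each inner integral reduces to a single canonical one:
\[
\int_{2^k}^{2^{k+1}} g(x)\,dx \;=\; 2^k \int_1^2 g(y)\,dy.
\]
Summing and using the geometric identity $\sum_{k=a}^{b-1} 2^k = 2^b - 2^a$ yields
\[
\E[g(X)] \;=\; \Bigl(\int_1^2 g(y)\,dy\Bigr)\!\sum_{k=a}^{b-1} f(x_k)\,2^k \;+\; O\!\bigl(\alpha\,\MSE_{\mathrm{max}}^{\mathrm{rel}}(2^b-2^a)\bigr).
\]
The remaining factor $\sum_k f(x_k)\cdot 2^k$ is a midpoint Riemann sum for $\int_{2^a}^{2^b} f(x)\,dx = 1$ with per-block error $O(\alpha\cdot 2^k)$, hence total error $O(\alpha(2^b-2^a))$. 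Substituting and using the trivial bound $\int_1^2 g \le \MSE_{\mathrm{max}}^{\mathrm{rel}}$ absorbs the multiplicative error into the additive one and yields the claim.

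The main obstacle is keeping the error bookkeeping clean: one must separate the density-variation error (from replacing $f$ by $f(x_k)$ inside each block), which appears as an additive term $O(\alpha \MSE_{\mathrm{max}}^{\mathrm{rel}}(2^b-2^a))$, from the Riemann-sum error in $\sum_k f(x_k)\cdot 2^k \approx 1$, which appears multiplicatively on $\int_1^2 g$. Both turn out to be of the same order once bounded via Assumption 3, giving the stated $O((2^b-2^a)\,\MSE_{\mathrm{max}}^{\mathrm{rel}}\cdot\alpha)$ remainder. A subtlety worth flagging is that Assumption 2 only controls $f$ within a $\sqrt{2}$-factor, which is exactly enough to handle a block of ratio $2$ when anchored at its geometric midpoint; anchoring at an endpoint would require chaining two applications and worsen the constant.
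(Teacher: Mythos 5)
Your proof is correct and follows the same dyadic-block decomposition as the paper: split the support into the intervals $[2^k,2^{k+1}]$, approximate $f$ by a constant on each block via Assumption~2, and use the power-of-two invariance $g(2x)=g(x)$ to reduce each block integral to $2^k\int_1^2 g$, with the density-variation and Riemann-sum errors both bounded by $O\bigl(\alpha\,\MSE_{\mathrm{max}}^{\mathrm{rel}}(2^b-2^a)\bigr)$. The only difference is that you anchor at the geometric midpoint $2^{k+1/2}$ rather than the paper's left endpoint $2^i$, which is a mild refinement: Assumption~2's $\sqrt{2}$-neighborhood then covers the whole block in a single application instead of requiring a two-step chain (absorbed by the $O(\cdot)$ in the paper's write-up).
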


\begin{proof}
We decompose the expectation over dyadic intervals:
\begin{align*}
    \E_{x\sim\mathcal{X}}\!\left[\frac{(x-Q(x))^2}{x^2}\right]
    &= \sum_{i=a}^{b-1}\int_{2^i}^{2^{i+1}}\frac{(x-Q(x))^2}{x^2}f_{\mathcal{X}}(x)\,dx. \\
\end{align*}
Within each interval, write $f_{\mathcal{X}}(x)=f_{\mathcal{X}}(2^i) + (f_{\mathcal{X}}(x)-f_{\mathcal{X}}(2^i))$.  
The first term yields
\[
\int_1^2 \frac{(x-Q(x))^2}{x^2}\,dx \cdot \sum_{i=a}^{b-1} 2^i f_{\mathcal{X}}(2^i).
\]
The second term is bounded using Assumption 2 and 3, giving
\[
\sum_{i=a}^{b-1}\int_{2^i}^{2^{i+1}} \MSE_{\mathrm{max}}^{\mathrm{rel}}\cdot O(\alpha)\,dx
= (2^b-2^a)\cdot\MSE_{\mathrm{max}}^{\mathrm{rel}}\cdot O(\alpha).
\]
Finally, the normalization error in the discrete approximation of $\int f_\mathcal{X}$ contributes an additional $O(\alpha)$ factor. Combining terms gives the stated result.
\end{proof}

\paragraph{Discussion.} 
Assumptions 1 and 3 hold for \texttt{absmax} $X_{I_*}$ quantization since floating-point values are bounded with bounded relative error.  
Assumption 2 is supported empirically (Figure~\ref{fig:scale_ranges}), where scale distributions are observed to be smooth.

\section{QuTLASS Results on GeForce GPUs}
\label{apx:qutlass}

\begin{figure}[htb]
  \begin{subfigure}[b]{0.45\textwidth}
    \centering
    \includegraphics[width=\linewidth]{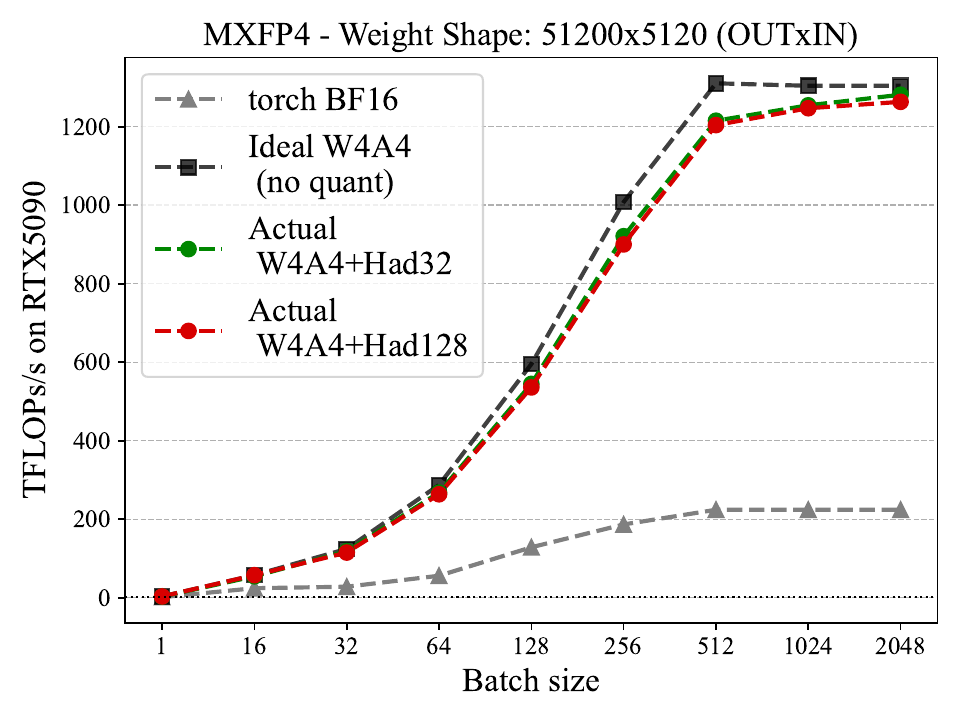}
  \end{subfigure}
  \centering
  \hfill
  \begin{subfigure}[b]{0.54\textwidth}
    \centering
    \includegraphics[width=\linewidth]{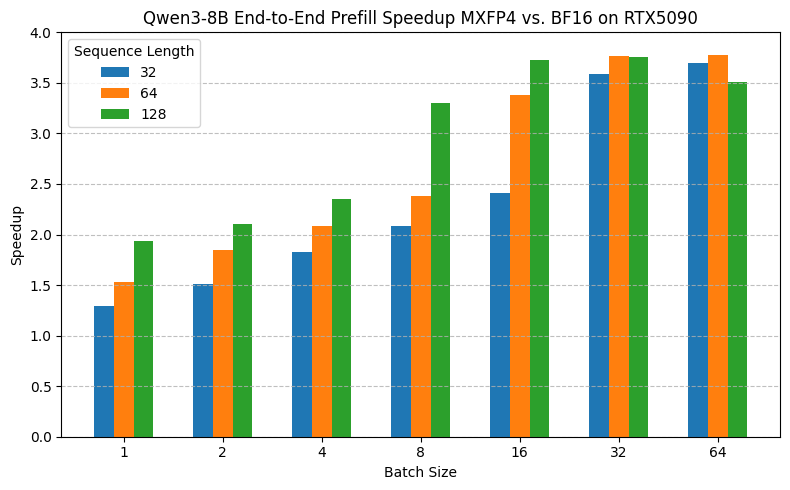}
  \end{subfigure}
  
  \caption{Illustration of QuTLASS performance for weights and activations on MXFP4 while increasing batch size, for a single linear LLM layer, showing the low-overhead of the quantization-related ops, and end-to-end using the Transformers library.}
  \label{fig:qutlass_5090}
\end{figure}

Figure~\ref{fig:qutlass_5090} illustrates additional QuTLASS performance results on an NVIDIA RTX5090 GPU.
The figure on the left shows throughput for a single layer extracted from a MXPF4 quantized Qwen3-32B model, while the figure on the right shows the end-to-end speedups on Transformers running Qwen3-8B with MXFP4 quantization compared to the BF16 baseline implementation on a single RTX5090 GPU.

\section{QuTLASS Speedups in Various Inference Regimes}
\label{apx:vllm_speed}

\begin{figure}[t]
    \centering
    \includegraphics[width=1.0\textwidth]{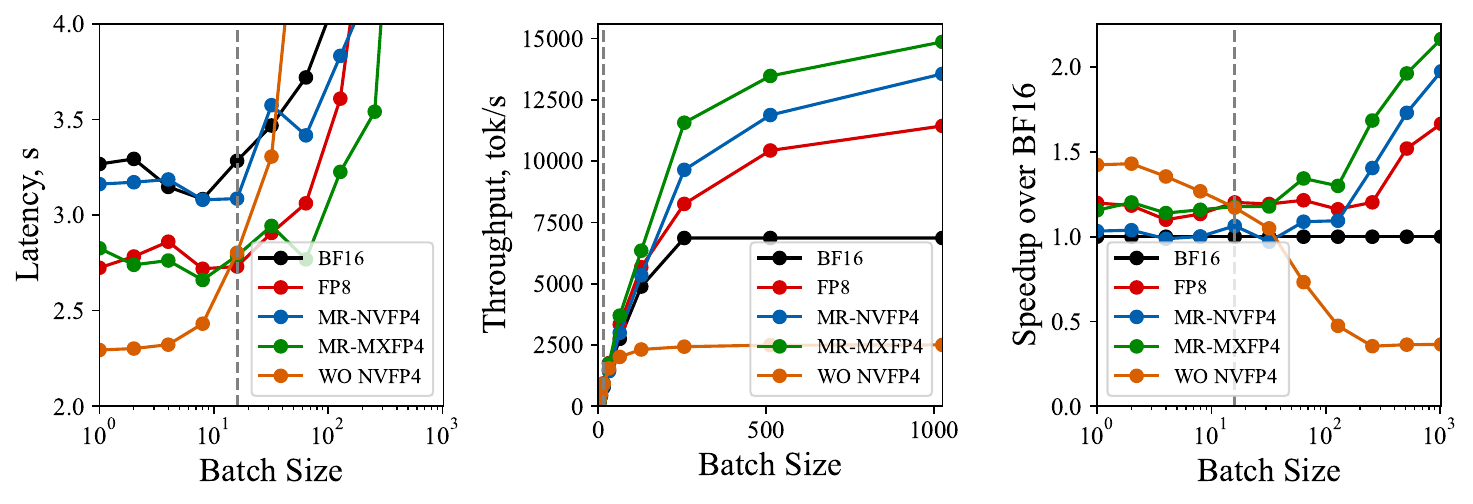}
    \caption{End-to-end speedups for Llama-3.3-70B-Instruct. The gray vertical line roughly separates small-batch inference (which covers single-user text generation) from large-batch inference (which includes prefill).}
    \label{fig:vllm_speedups}
\end{figure}

Figure~\ref{fig:vllm_speedups} demonstrates end-to-end inference speedups of FP8 and various FP4 formats relative to BF16 for small and large batch sizes for Llama-3.3-70B-Instruct running on a single NVIDIA B200 GPU in vLLM.

For large batch workloads, it is easy to see how our Micro-Rotated (MR) kernels outperform both BF16 and FP8, with MR-MXFP4 providing the highest throughput at around 15,000 tok/sec: a 2.2x increase over BF16 and a 1.3x increase over FP8.

These kernels, however, were not optimized for small batch workloads, where they show little to no improvement over BF16 and FP8. In that regime, which is characterized by memory-bound inference, it is preferable to use weight-only quantization, as activation quantization brings no benefits to inference speed. We present accuracy measurements of weight-only quantized models in Appendix~\ref{app:weight-only}. In Figure~\ref{fig:vllm_speedups}, we present latency and throughput measurements for a weight-only (WO) micro-rotated FP4 quantization scheme, which is already supported in vLLM. One can see that it shows approximately 20\% lower latency than FP8 for small batch size inference.

\section{MXFP scale fitting}\label{sec:mxfp_scale_fitting}

The principal cause of the poor performance of MXFP quantization is the large‑scale quantization error. On the one hand, the E8M0 format allows representing extremely small values, such as $2^{-127}$, and extremely large values, such as $2^{128}$. On the other, as shown in Figure \ref{fig:scale_ranges}, the actual range of weights and activations is much more narrow, making the E8M0 grid too coarse.
To address this mismatch, we propose a simple modification: fit the quantization grid to the data range.

The original MXFP quantization grid,  with 4/3 re-scaling, quantizes scales as follows:
\begin{equation}
s_{\mathrm{E8M0}} = (4/3) \cdot 2^{\mathrm{clamp}(\mathrm{round}(\log_2 s), -128,127)}.    
\end{equation}
One can estimate the smallest $s_{\min}$ and the largest $s_{\max}$ values 
in given tensors, such that the smallest value is mapped to -128, and the largest to 127:
\begin{equation}
s_{\mathrm{E8M0}} = 2^{(\log_2 s_{\max} - \log_2 s_{\min})\mathrm{clamp}(\mathrm{round}\left(255 \cdot \frac{\log_2 s - \log_2 s_{\min}}{\log_2 s_{\max} - \log_2 s_{\min}}\right), 0, 255) + \log_2 s_{\min}}.   
\end{equation}
The modification reduces the scale‑quantization error, by rescaling the exponent.  In effect, we replace the exponent by a value $2^{\alpha}, \alpha \in (0, 1)$, followed by rescaling:
\begin{equation}
s =  2^{\alpha q + \beta}
\end{equation}

We provide the comparision between the original MXFP and modified version, called MXFP4$^{\dagger}$,  in Table~\ref{tab:mxfp_scale_fitting}. In almost every setting, MXFP4$^{\dagger}$ yields a substantial performance boost relative to vanilla MXFP4 and performs close to NVFP4. One should note that MXFP4$^{\dagger}$ requires 4.25 bit per parameter compared to 4.5 for NVFP. 

\begin{table}[htb]
    \centering
    \begin{tabular}{l l c c}
    \toprule
    Method & Format & \textbf{Llama3 (8B)} & \textbf{Qwen3 (8B)} \\
    \midrule
    \multirow{3}{*}{RTN} & MXFP4 & 87.8 & 93.7 \\
    & MXFP4$^{\dagger}$ & 94.3 ({\color{ForestGreen}+6.5}) & 96.3 ({\color{ForestGreen}+2.6}) \\
    & {\color{gray}NVFP4} & {\color{gray}94.7} & {\color{gray}98.9} \\
    \midrule
    \multirow{3}{*}{RTN + HT} & MXFP4 & 89.2 & 93.6 \\
    & MXFP4$^{\dagger}$ & 93.9 ({\color{ForestGreen}+4.7}) & 96.3 ({\color{ForestGreen}+2.7})\\
    & {\color{gray}NVFP4} & {\color{gray}93.8} & {\color{gray}96.0} \\
    \midrule
    \multirow{3}{*}{GPTQ} & MXFP4 & 89.5 & 94.1 \\
    & MXFP4$^{\dagger}$ & 95.2 ({\color{ForestGreen}+5.7}) & 92.3 ({\color{red}-1.8}) \\
    & {\color{gray}NVFP4} & {\color{gray}95.7} & {\color{gray}98.1} \\
    \midrule
    \multirow{3}{*}{MR-GPTQ} & MXFP4  & 93.6 & 95.2 \\
    & MXFP4$^{\dagger}$ & 94.9 ({\color{ForestGreen}+1.3}) & 98.5 ({\color{ForestGreen}+3.3}) \\
    & {\color{gray}NVFP4} & {\color{gray}95.8} & {\color{gray}97.4} \\
    \end{tabular}
    \caption{Per-model recoveries with vanilla MXFP4 format and the proposed modification denoted by MXFP4$^{\dagger}$.}
    \label{tab:mxfp_scale_fitting}
\end{table}

Efficient arithmetic with MXFP scales requires that the weight and activation tensors in a layer share the same exponent. Consequently, the product of the activation scale $s_A$ and the weight scale $s_W$ can be expressed as:
\begin{equation}
s_{A} s_{W} = 2^{\alpha_A q_A + \beta_A} 2^{\alpha_W q_W + \beta_W} = 
2^{\alpha_A q_A + \beta_A + \alpha_B q_B + \beta_B}. 
\end{equation}
In the above expression, $\alpha_A$ and $\alpha_B$ have to the same for one to express scale multiplication in terms of power addition.

\section{Comparison between Variants on the Platinum Benchmark}\label{sec:platinumbench}

Next, we perform a detailed analysis between different versions of the GPTQ algorithm, on the two formats. 
As visible in Table~\ref{tab:unified-quantization-comparison-simulation}, the standard evaluation harness struggles to distinguish variants, likely because of high noise in some of the evaluations. 
To address this, we examine the differences between GPTQ variants on the less noisy PlatinumBench benchmark~\citep{vendrow2025largelanguagemodelbenchmarks}, which includes carefully curated tasks and questions. These experiments are performed with ``real'' kernels, via our vLLM integration. Figure~\ref{fig:combined_platinum_results} include full results across tasks, for the following variants: 
\begin{itemize}
    \item \textbf{Transform matrices}: Hadamard with different block sizes, denoted by e.g. \textbf{Had128}. 
    \item \textbf{Scale optimization}: Our approach (\textbf{MSE}) or the default (\textbf{MinMax}). 
    \item \textbf{Quantization ordering}: Static Activation Ordering (\textbf{ActOrder}) or arbitrary/initial (\textbf{Default}). 
\end{itemize}

\begin{figure}[!htb]
    \centering
    \begin{subfigure}[t]{0.49\textwidth}
        \centering
        \includegraphics[width=\linewidth]{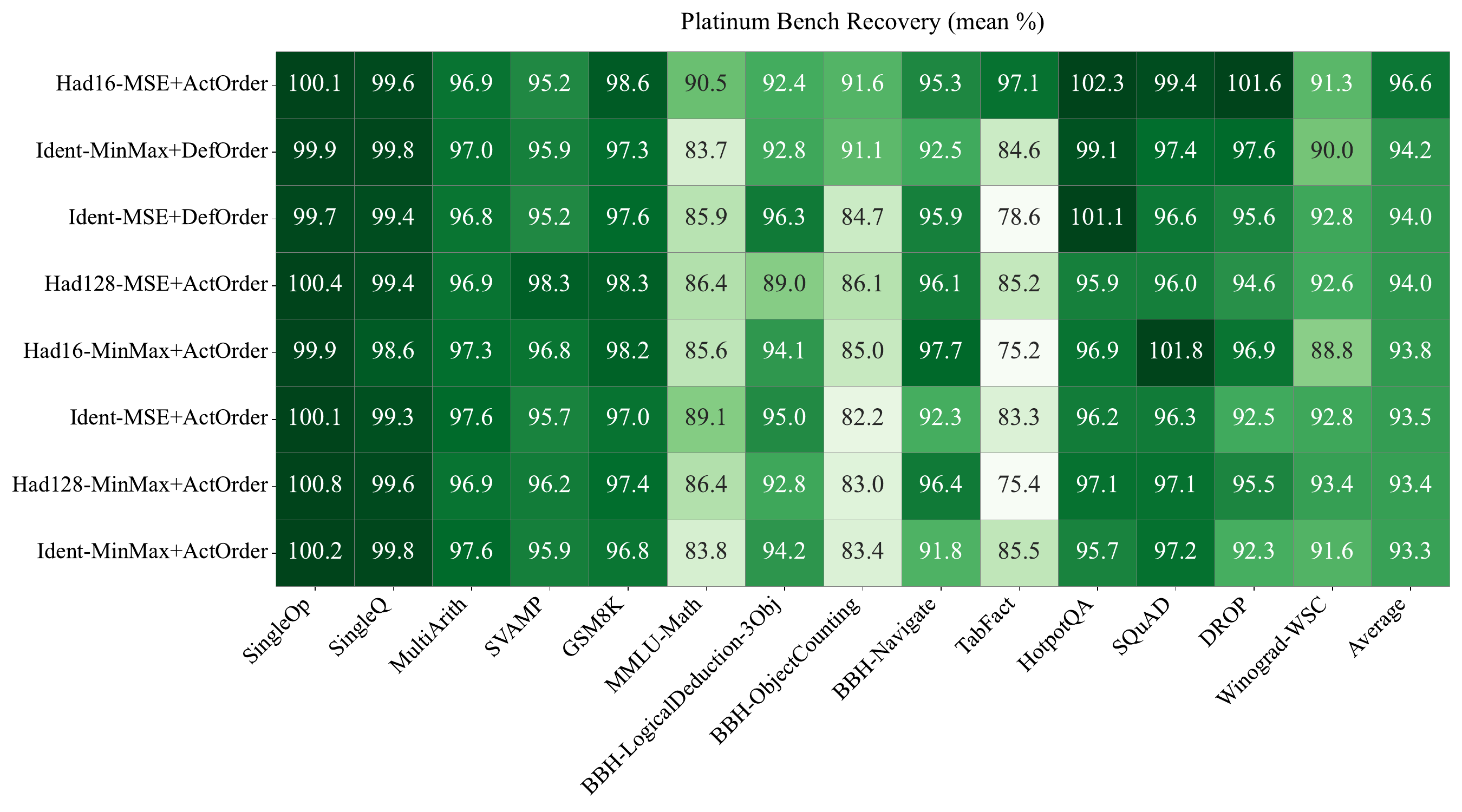}
        \caption{NVFP4 accuracy across GPTQ variants.}
        \label{fig:nvfp4_benchmark_recoveries}
    \end{subfigure}
    \hfill
        \begin{subfigure}[t]{0.49\textwidth}
        \centering
        \includegraphics[width=\linewidth]{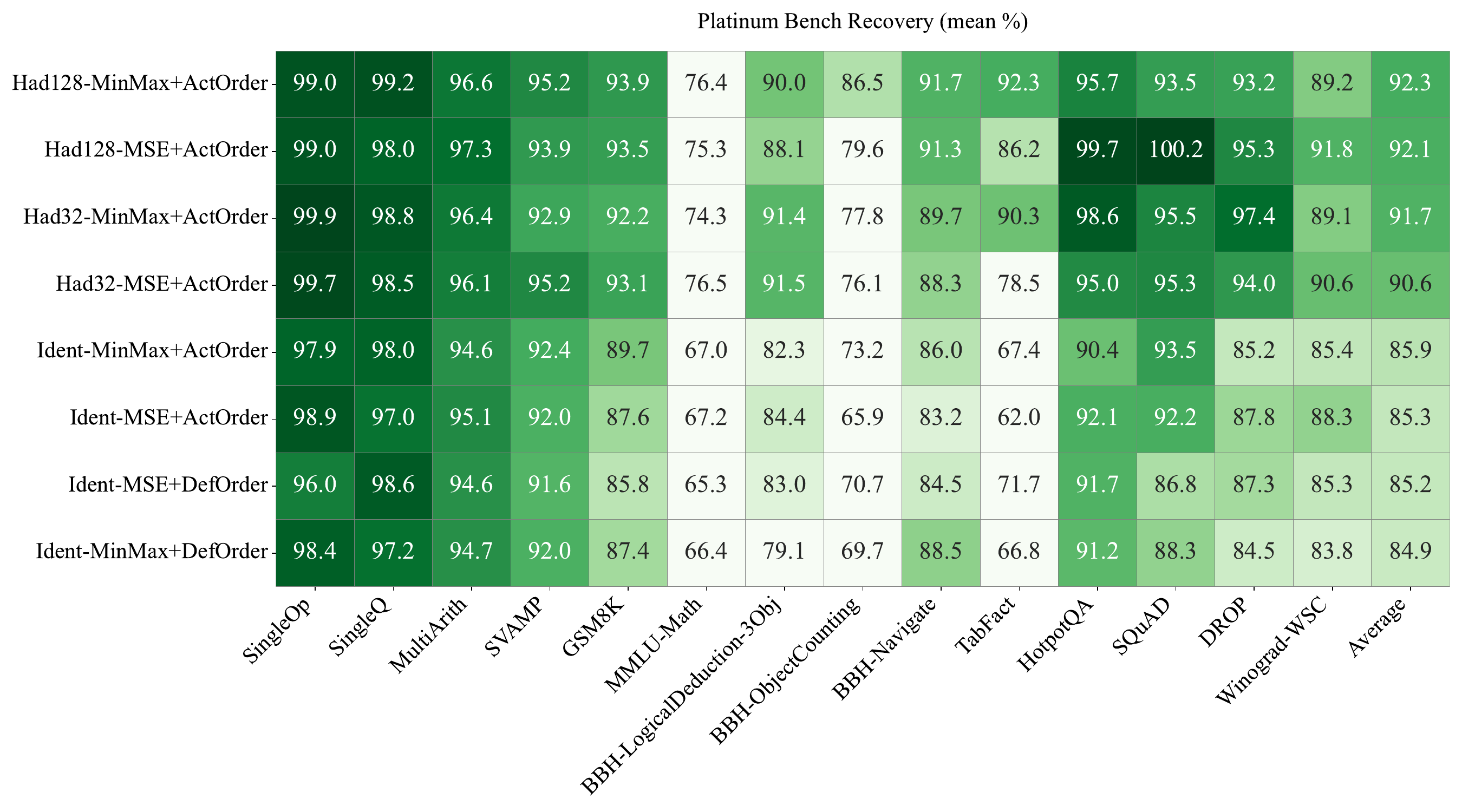}
        \caption{MXFP4 accuracy across GPTQ variants.}
        \label{fig:mxfp4_benchmark_recoveries}
    \end{subfigure}
    
    \vspace{0.5em}
    
    \begin{subfigure}[t]{0.49\textwidth}
        \centering
        \includegraphics[width=\linewidth]{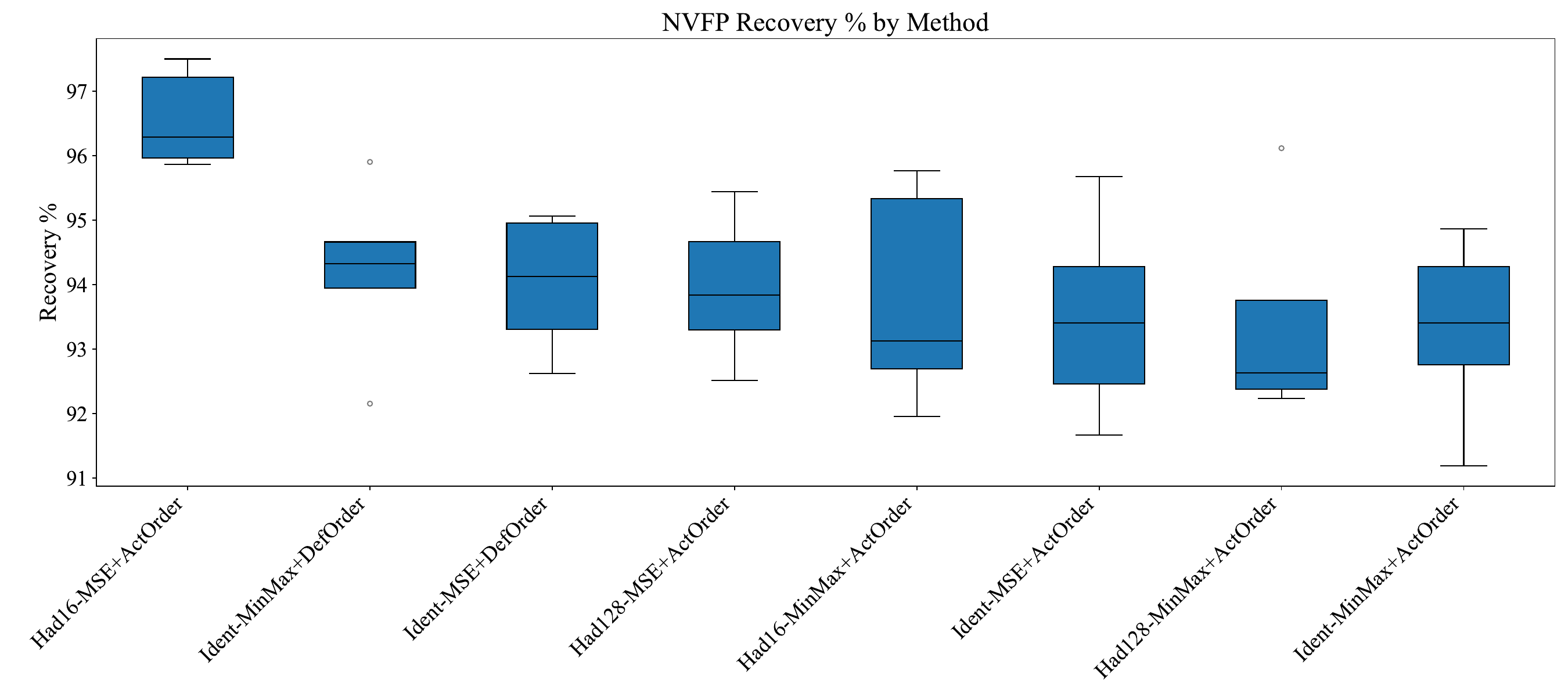}
        \caption{NVFP4 averages and standard deviations.}
        \label{fig:nvfp4_std_boxplots}
    \end{subfigure}
    \hfill
    \begin{subfigure}[t]{0.49\textwidth}
        \centering
        \includegraphics[width=\linewidth]{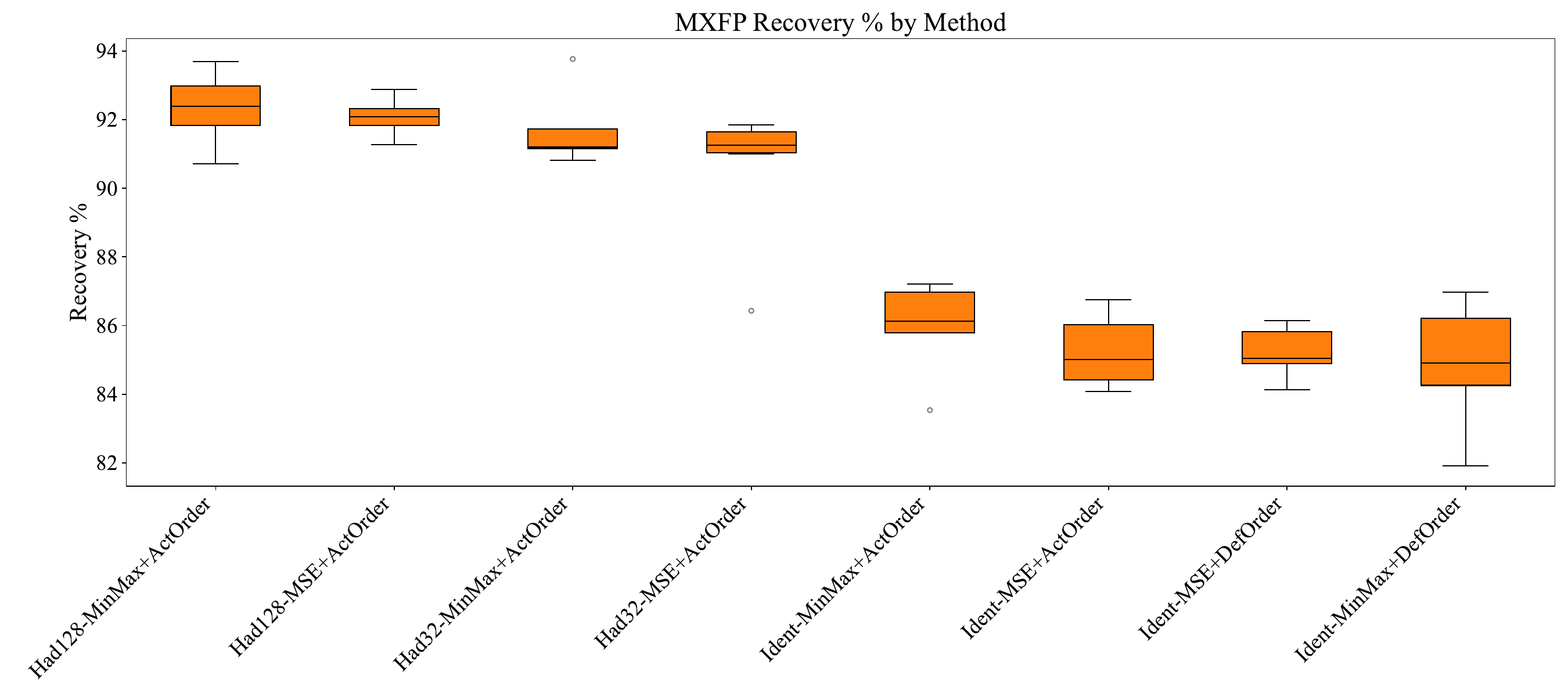}
        \caption{MXFP4 averages and standard deviations.}
        \label{fig:mxfp4_std_boxplots}
    \end{subfigure}
    
    \caption{Comparison of NVFP4 and MXFP4 quantization methods on Platinum benchmark tasks. Top row shows recovery results across different GPTQ/MR-GPTQ component combinations. Bottom row shows average recovery scores and standard deviations for each method.}
    \label{fig:combined_platinum_results}
\end{figure}

\paragraph{Discussion.} 
We observe the following: 
\begin{itemize}
    \item The results suggest that Hadamard rotations provide a statistically-significant advantage to our \methodname{} variant, with group-aligned Hadmard rotations (Had16), MSE and ActOrder, in the NVFP4 case as well. All other variants appear to be within variance of eachother on this benchmark, for NVFP4.
    \item We observe a large gap ( > 4 points on average) between the top NVFP4 recovery (96.6\%) and the top MXFP4 recovery (92.3\%). 
    \item Finally, the MXFP4 results show a very large recovery gap between the variants with rotations and the variants without. Moreover, for MXFP4, larger Hadamard rotations (128 vs 32) appear to clearly help, whereas, for NVFP4, matching the rotation size to the group size appears ideal.  
\end{itemize}

\section{Standard deviation}

We estimate the variance of evaluation scores by performing multiple quantization runs on Llama-3.1-8B-Instruct, varying the seeds for GPTQ calibration set sampling, as well as the strategies for scale selection and quantization ordering. These results were generated using our vLLM integration with QuTLASS kernels. Figure~\ref{fig:real_nvfp_mxfp_std_boxplots} displays the scores as bar plots, while Table~\ref{tab:real_nvfp_mxfp_std_results} lists the average recovery scores and their standard deviations. 

Additionally, we report the average recovery scores and their standard deviations for the Platinum benchmark suite \cite{vendrow2025largelanguagemodelbenchmarks} in Figure~\ref{fig:real_nvfp_mxfp_std_boxplots_platinum} and Table~\ref{tab:real_nvfp_mxfp_std_results}. We also report per-task recovery (\%) in Figure~\ref{fig:platinum_bench_task_recovery}.

\begin{figure}[!htb]
    \centering
    \begin{subfigure}{\linewidth}
        \centering
        \includegraphics[width=\linewidth]{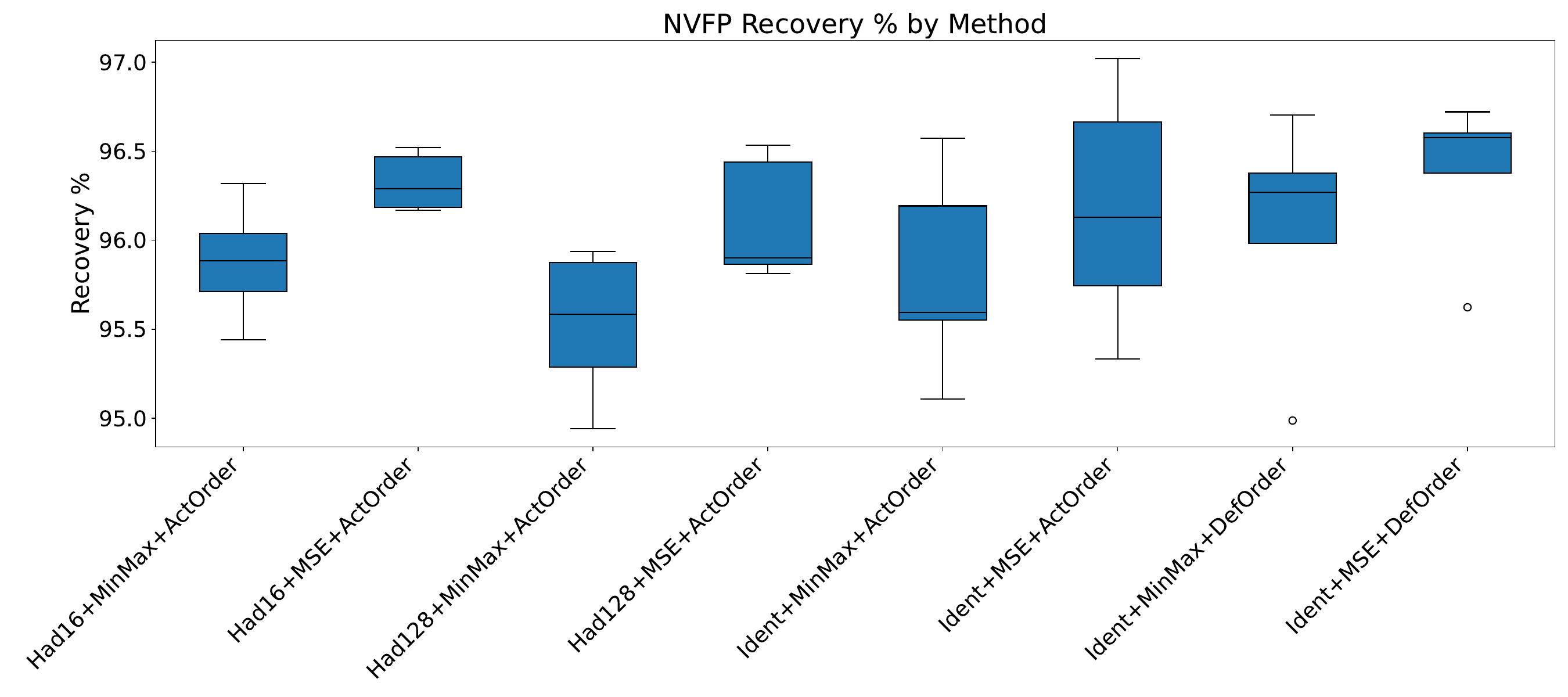}
        \label{fig:real_nvfp_std_boxplots_by_method}
    \end{subfigure}

    \vspace{0.8em} 

    \begin{subfigure}{\linewidth}
        \centering
        \includegraphics[width=\linewidth]{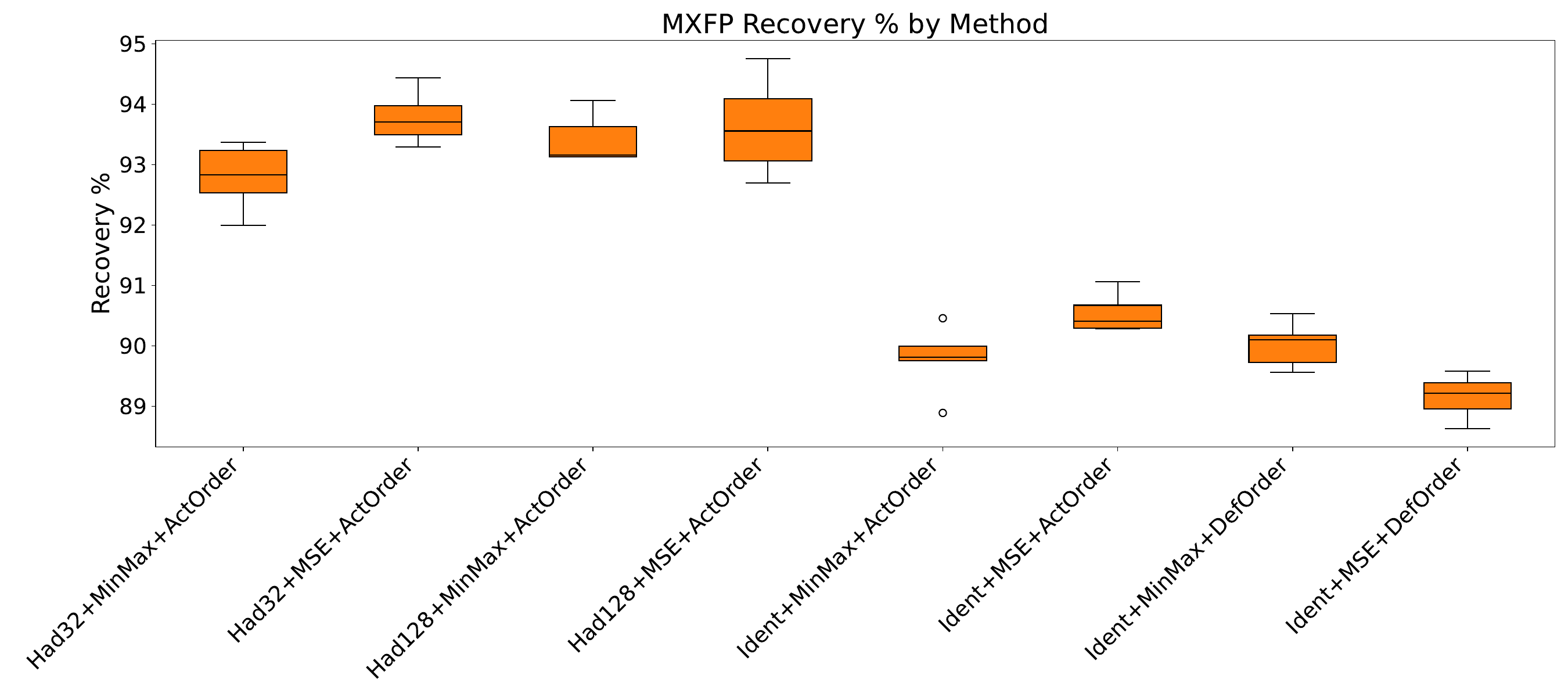}
        \label{fig:real_mxfp_std_boxplots_by_method}
    \end{subfigure}

    \caption{Accuracy results for NVFP4 and MXFP4 across different combinations of MR-GPTQ  components, averaged over five random seeds using vLLM kernels on the benchmark suite.}
    \label{fig:real_nvfp_mxfp_std_boxplots}
\end{figure}

\begin{figure}
    \centering
    \caption{Accuracy results for MXFP4 across different MR-GPTQ component combinations on the Platinum benchmark tasks.}
    \includegraphics[width=\linewidth]{figures/MXFP_benchmark_recoveries_platinum_bench_greens.pdf}

    \label{fig:platinum_bench_task_recovery}
\end{figure}

\begin{figure}
    \centering
    \caption{Accuracy results for NVFP4 across different MR-GPTQ component combinations on the Platinum benchmark tasks.}
    \includegraphics[width=\linewidth]{figures/NVFP_benchmark_recoveries_platinum_bench_greens.pdf}

    \label{fig:platinum_bench_task_recovery_nvfp}
\end{figure}

\begin{figure}[!htb]
    \centering
    \begin{subfigure}{\linewidth}
        \centering
        \includegraphics[width=\linewidth]{figures/NVFP_std_boxplots_by_method_platinum.pdf}
        \label{fig:real_nvfp_std_boxplots_by_method}
    \end{subfigure}

    \vspace{0.8em} 

    \begin{subfigure}{\linewidth}
        \centering
        \includegraphics[width=\linewidth]{figures/MXFP_std_boxplots_by_method_platinum.pdf}
        \label{fig:real_mxfp_std_boxplots_by_method}
    \end{subfigure}

    \caption{Average recovery scores and standard deviations for NVFP and MXFP methods on the Platinum benchmarks.}
    \label{fig:real_nvfp_mxfp_std_boxplots_platinum}
\end{figure}

\begin{table}[!htb]
\centering
\begin{tabular}{llcccc}
\toprule
Format & Method & Standard Bench \% & STD &  Platinum Bench \% & STD\\

\midrule
\multirow{8}{*}{NVFP} 
 & Had16+MinMax+ActOrder   & 95.88 & 0.332 & 93.77 & 1.680\\
 & Had16+MSE+ActOrder      & 96.33 & 0.163 & 96.57 & 0.746\\
 & Had128+MinMax+ActOrder  & 95.52 & 0.416 & 93.42 &  1.618 \\
 & Had128+MSE+ActOrder     & 96.11 & 0.347 & 93.95 & 1.143\\
 & Ident+MinMax+ActOrder   & 95.84 & 0.487 & 93.27 &  1.263\\
 & Ident+MSE+ActOrder      & 96.18 & 0.589 & 93.51 & 1.304\\
 & Ident+MinMax+DefOrder    & 96.06 & 0.655 & 94.20 & 1.358\\
 & Ident+MSE+DefOrder       & 96.38 & 0.441 & 94.01 & 1.053\\
\midrule
\multirow{8}{*}{MXFP} 
 & Had32+MinMax+ActOrder   & 92.79 & 0.554 & 91.73 &  1.183\\
 & Had32+MSE+ActOrder      & 93.78 & 0.445 & 90.51 & 2.294\\
 & Had128+MinMax+ActOrder  & 93.42 & 0.416 & 92.32 & 1.128\\
 & Had128+MSE+ActOrder     & 93.63 & 0.817 & 91.86 & 0.743\\
 & Ident+MinMax+ActOrder   & 89.78 & 0.570 & 85.93 & 1.459\\
 & Ident+MSE+ActOrder      & 90.54 & 0.330 & 85.26 & 1.112\\
 & Ident+MinMax+DefOrder   & 89.16 & 0.372 & 84.85 & 1.95\\
 & Ident+MSE+DefOrder      & 90.02 & 0.387 & 85.21 & 0.798\\
\bottomrule
\end{tabular}
\caption{Average Recovery scores and standard deviations for NVFP and MXFP methods across the Standard benchmark and Platinum benchmark.}
\label{tab:real_nvfp_mxfp_std_results}
\end{table}

\FloatBarrier
\section{The Effect of Different Linear Transforms}
\label{app:transforms}

In this section we ablate various choices of transforms adopted for outlier mitigation of outliers. Specifically, we consider the following options:
\begin{itemize}
    \item Identity transform.
    \item Discrete Cosine Transform (DCT).
    \item Hadamard rotation \citep{tseng2024quipbetterllmquantization, quarot, hadamard_dao}.
    \item Grouped Sequency-arranged Rotation (GSR) \citep{choi2025groupedsequencyarrangedrotationoptimizing}. 
\end{itemize}

We sweep over different options of transform sizes ($\{16, 32, 64, 128, 256\}$) both for NVFP and MXFP formats.
The average score on 5 tasks from LM Evaluation Harness (piqa, winogrande, hellaswag, arc-easy, arc-challenge) is reported. 

From these results, one can observe that rotations yield small improvement relative to identity transform for MXFP format and minor degradation for NVFP with RTN quantization. Different transform sizes perform more or less the same. 

\begin{table}[!htb]
    \centering
    \small
    \setlength{\tabcolsep}{4pt} 
    \renewcommand{\arraystretch}{1.1} 
    \resizebox{\textwidth}{!}{%
    \begin{tabular}{ c | c c | c c c c c | c}
    \toprule
     Transformation & Transformation Size & Weight Quant & PIQA&	winogrande	&hellaswag	&arc-easy&	arc-challenge	&Avg \\ 
    \midrule
    FP16  & - & - & 0.8074	& 0.7301	& 0.792	& 0.7769	& 0.5307	& 0.7274 \\ 
    \midrule
     \multirow{2}{*}{-} & - & RTN & 0.802	& 0.7261	& 0.7731	& 0.7466	& 0.4923	& 0.708 \\
      & - & GPTQ  & 0.7933	& 0.7214	& 0.7698	& 0.7664	& 0.5111	& 0.7124  \\ \cmidrule{1-9}
     \multirow{12}{*}{DCT} & \multirow{2}{*}{16} 
            & RTN &  0.79	& 0.6859	& 0.7583	& 0.742	& 0.4889	& 0.693\\
       &   & GPTQ& 0.7824	& 0.7111	& 0.766	& 0.7559	& 0.4991	& 0.7029  \\ \cmidrule{2-9}

       & \multirow{2}{*}{32} & RTN& 0.7786	& 0.7119	& 0.7572	& 0.7353	& 0.4693	& 0.6905  \\
       &   & GPTQ & 0.7813	& 0.7135	& 0.7718	& 0.7117	& 0.4829	& 0.6922 \\ \cmidrule{2-9}

       & \multirow{2}{*}{64} & RTN & 0.7862	& 0.7024	& 0.7695	& 0.7306	& 0.4599	& 0.6897 \\
       &   & GPTQ& 0.7878	& 0.7198	& 0.7673	& 0.7765	& 0.5068	& 0.7116  \\ \cmidrule{2-9}

       & \multirow{2}{*}{128} & RTN & 0.7737	& 0.7206	& 0.7676	& 0.7466	& 0.4701	& 0.6957  \\
       &   & GPTQ & 0.7873	& 0.708	& 0.7715	& 0.7399	& 0.494	& 0.7001 \\ \cmidrule{2-9}

       & \multirow{2}{*}{256} & RTN & 0.7916	& 0.7135	& 0.7698	& 0.7563	& 0.4983	& 0.7059 \\
       &   & GPTQ & 0.7911	& 0.7017	& 0.7692	& 0.7694	& 0.506	& 0.7074 \\ \midrule
     \multirow{12}{*}{DST} & \multirow{2}{*}{16} 
            & RTN &0.7824	&0.7143	&0.7575	&0.7256	&0.4804	&0.692\\
       &   & GPTQ&0.7878	&0.7198	&0.7628	&0.7395	&0.4855	&0.6991\\ \cmidrule{2-9}

       & \multirow{2}{*}{32} & RTN &0.7856	&0.7198	&0.7399	&0.7395	&0.4667	&0.6903 \\
       &   & GPTQ &0.7889	&0.7096	&0.7633	&0.7731	&0.5026	&0.7075\\ \cmidrule{2-9}

       & \multirow{2}{*}{64} & RTN&0.7911	&0.7253	&0.7536	&0.7635	&0.4804	&0.7028  \\
       &   & GPTQ &0.7911	&0.7088	&0.7638	&0.7614	&0.5	&0.705\\ \cmidrule{2-9}

       & \multirow{2}{*}{128} & RTN &0.7856	&0.7024	&0.7625	&0.7677	&0.4881	&0.7013\\
       &   & GPTQ &0.7824	&0.7064	&0.7637	&0.7778	&0.5009	&0.7062 \\ \cmidrule{2-9}

       & \multirow{2}{*}{256} & RTN &0.7867	&0.6993	&0.7579	&0.737	&0.4804	&0.6923 \\
       &   & GPTQ &0.7856	&0.7048	&0.7674	&0.7462	&0.4812	&0.6971\\ \midrule

     \multirow{12}{*}{Hadamard} & \multirow{2}{*}{16} 
            & RTN &0.7927	&0.7096	&0.7674	&0.7471	&0.465	&0.6963\\
       &   & GPTQ&0.7873	&0.7096	&0.7697	&0.758	&0.5034	&0.7056  \\ \cmidrule{2-9}

       & \multirow{2}{*}{32} & RTN &0.784	&0.719	&0.7639	&0.7534	&0.4881	&0.7017  \\
       &   & GPTQ &0.7965	&0.7348	&0.7668	&0.7538	&0.506	&0.7116\\ \cmidrule{2-9}

       & \multirow{2}{*}{64} & RTN &0.7818	&0.7032	&0.763	&0.7395	&0.4863	&0.6948 \\
       &   & GPTQ &0.7856	&0.7151	&0.7657	&0.7614	&0.5017	&0.7059 \\ \cmidrule{2-9}

       & \multirow{2}{*}{128} & RTN&0.7884	&0.7206	&0.766	&0.7551	&0.506	&0.7072 \\
       &   & GPTQ &0.7938	&0.7111	&0.7729	&0.7681	&0.5273	&0.7146 \\ \cmidrule{2-9}

       & \multirow{2}{*}{256} & RTN &0.7878	&0.6969	&0.7643	&0.7681	&0.4983	&0.7031  \\
       &   & GPTQ &0.79	&0.7253	&0.7738	&0.7673	&0.4949	&0.7102\\ \midrule

     \multirow{12}{*}{GSR} & \multirow{2}{*}{16} 
            & RTN &0.7933	&0.7056	&0.7694	&0.7513	&0.4744	&0.6988 \\
       &   & GPTQ &0.7998	&0.6985	&0.7683	&0.7635	&0.4863	&0.7033 \\ \cmidrule{2-9}

       & \multirow{2}{*}{32} & RTN &0.7873	&0.6985	&0.762	&0.7702	&0.494	&0.7024\\
       &   & GPTQ &0.79	&0.7214	&0.77	&0.7593	&0.5	&0.7081  \\ \cmidrule{2-9}

       & \multirow{2}{*}{64} & RTN &0.7911	&0.7151	&0.7627	&0.7588	&0.4821	&0.702\\
       &   & GPTQ &0.796	&0.7222	&0.7717	&0.7622	&0.4949	&0.7094 \\ \cmidrule{2-9}

       & \multirow{2}{*}{128} & RTN &0.7878	&0.7174	&0.7656	&0.7546	&0.4898	&0.703\\
       &   & GPTQ &0.7894	&0.7143	&0.7721	&0.7668	&0.506	&0.7097\\ \cmidrule{2-9}

       & \multirow{2}{*}{256} & RTN &0.7797	&0.6906	&0.7626	&0.7454	&0.4735	&0.6904 \\
       &   & GPTQ &0.8014	&0.7293	&0.7756	&0.763	&0.4991	&0.7137  \\

    \bottomrule
    \end{tabular}
    }
    \caption{Performance of Llama-3-8B with different transformations with NVFP4 format.}
    \label{tab:llama3-8B_otherTransformations_nvfp4}
\end{table}

\begin{table}[h!]
    \centering
    \small
    \setlength{\tabcolsep}{4pt} 
    \renewcommand{\arraystretch}{1.1} 
    \resizebox{\textwidth}{!}{%
    \begin{tabular}{ c | c c | c c c c c | c}
    \toprule
     Transformation & Transformation Size & Weight Quant & PIQA&	winogrande	&hellaswag	&arc-easy&	arc-challenge	&Avg \\ 
    \midrule
    FP16  & - & - & 0.8074	& 0.7301	& 0.792	& 0.7769	& 0.5307	& 0.7274 \\ 
    \midrule
     \multirow{2}{*}{-} & - & RTN & 0.7704	& 0.6875	& 0.7481	& 0.7121	& 0.471	& 0.6778 \\
      & - & GPTQ& 0.7699	& 0.693	& 0.753	& 0.7327	& 0.4718	& 0.6841 \\ \cmidrule{1-9}

     \multirow{12}{*}{DCT} & \multirow{2}{*}{16} 
            & RTN &0.7628	&0.7072	&0.7447	&0.7205	&0.4582	&0.6787 \\
       &   & GPTQ &0.7753	&0.7009	&0.7534	&0.7365	&0.4846	&0.6902 \\ \cmidrule{2-9}

       & \multirow{2}{*}{32} & RTN &0.7699	&0.6914	&0.7405	&0.6987	&0.4437	&0.6688  \\
       &   & GPTQ&0.7508	&0.6969	&0.7465	&0.7281	&0.4531	&0.6751\\ \cmidrule{2-9}

       & \multirow{2}{*}{64} & RTN &0.7693	&0.7127	&0.7454	&0.7079	&0.4556	&0.6782\\
       &   & GPTQ &0.7889	&0.7111	&0.7524	&0.7529	&0.465	&0.6941\\ \cmidrule{2-9}

       & \multirow{2}{*}{128} & RTN &0.7541	&0.6851	&0.7398	&0.6616	&0.4036	&0.6488 \\
       &   & GPTQ &0.7731	&0.7088	&0.7455	&0.7462	&0.4804	&0.6908  \\ \cmidrule{2-9}

       & \multirow{2}{*}{256} & RTN &0.7791	&0.6953	&0.7392	&0.6987	&0.4411	&0.6707 \\
       &   & GPTQ&0.7894	&0.6946	&0.7541	&0.7504	&0.4744	&0.6926\\ \midrule
     \multirow{12}{*}{DST} & \multirow{2}{*}{16} 
            & RTN &0.7731	&0.6906	&0.7493	&0.7391	&0.4522	&0.6809 \\
       &   & GPTQ &0.7835	&0.6898	&0.7593	&0.7399	&0.4787	&0.6902 \\ \cmidrule{2-9}

       & \multirow{2}{*}{32} & RTN &0.7639	&0.6906	&0.7441	&0.7332	&0.4582	&0.678 \\
       &   & GPTQ &0.7802	&0.6985	&0.7563	&0.7483	&0.4753	&0.6917 \\ \cmidrule{2-9}

       & \multirow{2}{*}{64} & RTN &0.7704	&0.689	&0.7402	&0.7054	&0.4599	&0.673 \\
       &   & GPTQ&0.7769	&0.6875	&0.7599	&0.7189	&0.4693	&0.6825 \\ \cmidrule{2-9}

       & \multirow{2}{*}{128} & RTN &0.7612	&0.6772	&0.7491	&0.7003	&0.4497	&0.6675 \\
       &   & GPTQ &0.7693	&0.6914	&0.7567	&0.7462	&0.d923	&0.6912  \\ \cmidrule{2-9}

       & \multirow{2}{*}{256} & RTN&0.7731	&0.6906	&0.7493	&0.7391	&0.4522	&0.6809  \\
       &   & GPTQ &0.778	&0.7064	&0.7544	&0.7412	&0.4923	&0.6945  \\ \midrule
     \multirow{12}{*}{Hadamard} & \multirow{2}{*}{16} 
            & RTN &0.7737	&0.6906	&0.7499	&0.6995	&0.4616	&0.675 \\
       &   & GPTQ &0.7867	&0.7206	&0.7623	&0.7218	&0.4701	&0.6923 \\ \cmidrule{2-9}

       & \multirow{2}{*}{32} & RTN &0.7715	&0.6946	&0.7518	&0.7466	&0.5034	&0.6936 \\
       &   & GPTQ &0.7807	&0.7032	&0.763	&0.7471	&0.4778	&0.6944 \\ \cmidrule{2-9}

       & \multirow{2}{*}{64} & RTN &0.7862	&0.7088	&0.7511	&0.7315	&0.4667	&0.6889 \\
       &   & GPTQ &0.796	&0.6993	&0.7625	&0.7635	&0.4923	&0.7027  \\ \cmidrule{2-9}

       & \multirow{2}{*}{128} & RTN &0.7807	&0.7064	&0.7529	&0.7306	&0.4548	&0.6851 \\
       &   & GPTQ&0.7807	&0.6946	&0.7646	&0.7538	&0.4915	&0.697 \\ \cmidrule{2-9}

       & \multirow{2}{*}{256} & RTN&0.778	&0.7024	&0.7491	&0.7104	&0.4625	&0.6805\\
       &   & GPTQ &0.7818	&0.7032	&0.7624	&0.7576	&0.4795	&0.6969  \\ \midrule
     \multirow{12}{*}{GSR} & \multirow{2}{*}{16} 
            & RTN &0.7813	&0.6977	&0.7522	&0.6982	&0.4684	&0.6796 \\
       &   & GPTQ &0.7845	&0.7048	&0.7682	&0.7546	&0.4735	&0.6971  \\ \cmidrule{2-9}

       & \multirow{2}{*}{32} & RTN&0.7748	&0.693	&0.7514	&0.742	&0.4991	&0.6921  \\
       &   & GPTQ &0.7856	&0.7111	&0.7631	&0.7517	&0.5026	&0.7028 \\ \cmidrule{2-9}

       & \multirow{2}{*}{64} & RTN &0.7889	&0.7072	&0.7464	&0.7226	&0.4514	&0.6833 \\
       &   & GPTQ&0.7949	&0.7009	&0.7613	&0.7412	&0.4829	&0.6962  \\ \cmidrule{2-9}

       & \multirow{2}{*}{128} & RTN&0.7753	&0.7001	&0.7538	&0.7226	&0.4659	&0.6835  \\
       &   & GPTQ &0.7813	&0.7056	&0.7595	&0.7395	&0.4846	&0.6941  \\ \cmidrule{2-9}

       & \multirow{2}{*}{256} & RTN &0.7753	&0.6819	&0.7494	&0.7197	&0.4642	&0.6781  \\
       &   & GPTQ &0.778	&0.7151	&0.7542	&0.7445	&0.4923	&0.6968  \\ 
    \bottomrule
    \end{tabular}
    }
    \caption{Performance of Llama-3-8B with different transformations with MXFP4 format.}
    \label{tab:llama3-8B_otherTransformations_mxfp4}
\end{table}

 

\end{document}